\newcommand{\dee}{\mathrm{d}}
\newcommand{\partition}{\mathcal{A}}
\newcommand{\partitionB}{\mathcal{B}}
\newcommand{\states}{\mathcal{S}}
\newcommand{\initial}{s_{init}}
\newcommand{\reals}{\mathbb{R}}
\newcommand{\mdp}{\mathcal{M}}
\newcommand{\borel}{\mathcal{B}}
\newcommand{\run}{\pi}
\newcommand{\runs}{\Pi}
\newcommand{\transitions}{T}
\newcommand{\act}{\mathit{Act}}
\newcommand{\cost}{\mathcal{C}}
\newcommand{\goal}{\mathcal{G}}
\newcommand{\expected}{\ensuremath{\mathbb{E}}}
\newcommand{\adtree}[1][]{\ensuremath{\psi_{#1}}}
\newcommand{\iaction}[1]{\ensuremath{a?}}
\newcommand{\graphover}[1][\adtree]{\ensuremath{\mathtt{Graph}(\adtree)}}
\DeclareMathAlphabet{\mathpzc}{OT1}{pzc}{m}{it}
\newcommand{\pruns}[1]{\ensuremath{P_{#1}}}
\newcommand{\costinfty}{\cost^{\infty}}
\newcommand{\viop}{ \ensuremath{{\cal L}}}
\DeclareMathOperator*{\opt}{opt}
\newcommand{\minmax}{ \ensuremath{\opt}}
\newcommand{\closure}{ \mathit{cl}  }
\title{Approximating Euclidean by Imprecise Markov Decision Processes}
\author{Manfred Jaeger
\and Giorgio Bacci
\and Giovanni Bacci
\and Kim Guldstrand Larsen
\and Peter Gj{\o}l Jensen
}
\institute{Department of Computer Science, Aalborg University, Denmark }
\begin{document}
\maketitle

\begin{abstract}
  Euclidean Markov decision processes are a powerful tool for modeling control
  problems under uncertainty over continuous domains.  Finite state imprecise,
  Markov decision processes can be used to approximate the behavior of these
  infinite models. In this paper we address two questions: first, we investigate
  what kind of approximation guarantees are obtained when the Euclidean process is approximated
  by finite state approximations induced by 
  increasingly fine partitions of the continuous state space. We show that for
  cost functions over finite time horizons the approximations become arbitrarily
  precise. Second, we use  imprecise Markov decision process approximations as
  a tool to analyse and validate cost functions and strategies obtained by reinforcement
  learning. We find that, on the one hand, our new theoretical results validate basic design choices
  of a previously proposed reinforcement learning approach. On the other hand, the imprecise
  Markov decision process approximations reveal some inaccuracies in the learned cost functions.
\end{abstract}

\section{Introduction}

Markov Decision  Processes (MDP) \cite{Puterman2005} provide  a unifying
framework for  modeling decision  making in situations  where outcomes
are  partly  random  and  partly  under  the  control  of  a  decision
maker. MDPs are  useful for studying optimization  problems solved via
dynamic  programming and  reinforcement  learning.  They are used  in
several areas,  including economics, control, robotics  and autonomous
systems.   In its simplest form, an MDP  comprises a finite
set of states $\states$, a finite set of control actions $\act$, which for each
state $s$ and action $a$  specifies the transition probabilities $P_a(s,s')$
to successor states  $s'$.  In  addition, transitioning  from a
state $s$  an action  $a$ has  an immediate  cost $C(s,a)$\footnote{In
  several  alternative but  essentially equivalent  definitions of  MDPs
  transitions have associated rewards rather than cost, and the reward
  may be  depend on  the successor state as  well.}. The overall
problem  is to  find a  strategy $\sigma$  that specifies  the action
$\sigma(s)$  to be  made  in  state $s$  in  order  to optimize  some
objective (e.g. the expected cost of reaching a goal state).

For  many applications,  however,  such as  queuing systems,  epidemic
processes (e.g. COVID19), and  population processes the restriction to
a finite state-space is inadequate.   Rather, the underlying system has
an infinite state-space and the decision making process must take into
account  the continuous  dynamics of  the system.   In this  paper, we
consider a  particular class of infinite-state  MDPs, namely Euclidean
Markov Decision  Processes \cite{jaeger2019teaching}, where the  state space $\states$
is  given  by a  (measurable)  subset  of  $\reals^K$ for  some  fixed
dimension  $K$.

As   an  example,   consider  the   semi-random  walk   illustrated  on
the left of
Fig.~\ref{randomwalkfig}                with               state-space
$\states=[0,x_{max}]\times [0,t_{max}]$  (one dimensional space,  and time).
Here the goal is to cross  the $x=1$ finishing line before $t=1$.  The
decision  maker has  two actions  at her  disposal: to  move fast  and
expensive (cost $3$), or to  move slow and cheap (cost $1$).  
Both  actions have uncertainty
about distance traveled  and time taken.  This  uncertainty is modeled
by a uniform distribution over a successor state square: given current
state $(x,t)$ and action $a \in \{\mathit{slow},\mathit{fast}\}$, the distribution over possible successor
states       is       the        uniform       distribution       over
$[x+\delta(a) -\varepsilon,x+\delta(a) + \varepsilon ] \times [t+\tau(a) - \varepsilon ,t+\tau(a) + \varepsilon]$,
where $(\delta(a),\tau(a))$ represents the direction of the movement in space and time which depends
on the action $a$, while the parameter $\varepsilon$ models the uncertainty. 
Now, the question is  to find the strategy $\sigma:\states\rightarrow \act$ that will minimize the expected
cost of reaching a goal state.

\begin{figure}
\centering
\includegraphics[scale=0.3]{./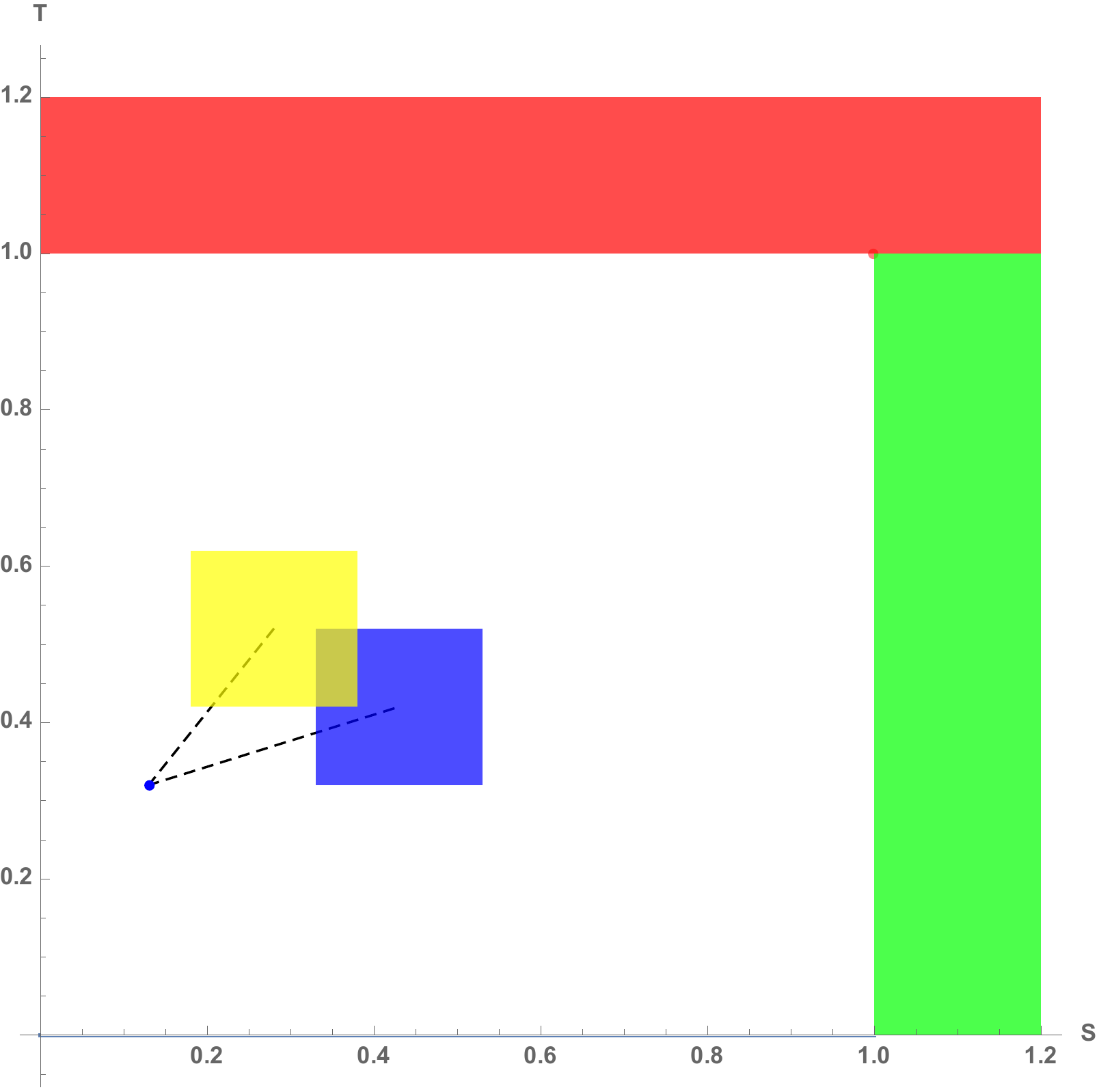}
\hspace{10mm}
 \includegraphics[scale=0.3]{./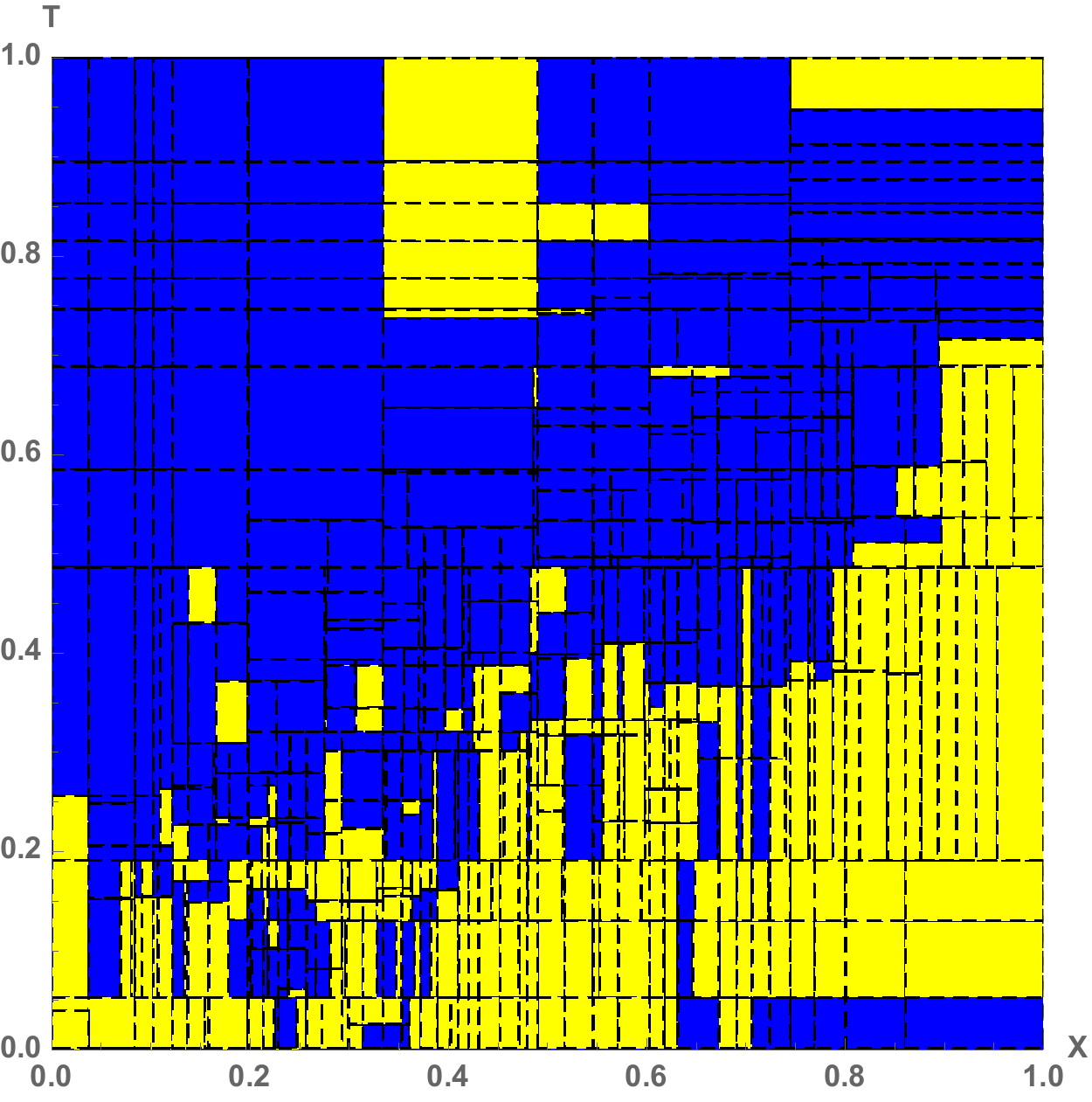} 
    \caption{Left: a Semi-Random Walk on $\states=[0,1.2]\times [0,1.2]$. Green: goal area, red: failure area,
      blue dot: current state, yellow/blue squares: successor state squares for \emph{fast} (blue) and \emph{slow} (yellow)
      actions. Right: partition of $[0,1]\times [0,1]$ and strategy learned by UPPAAL STRATEGO; partition regions colored
      according to actions prescribed by the strategy.}
  \label{randomwalkfig}
\end{figure}

In  \cite{jaeger2019teaching},  we  proposed   two  reinforcement  learning  algorithms
implemented in UPPAAL STRATEGO \cite{David-et-al2015}, using online partition
refinement techniques. In that work we experimentally demonstrated its
improved  convergence  tendencies  on  a range  of  models.   For  the
semi-random  walk example, the  online  learning
algorithm   returns  the  strategy illustrated  on the right of 
Fig.~\ref{randomwalkfig}.
%

%
However,  despite  its   efficiency  and  experimentally  demonstrated
convergence  properties,   the  learning  approach   of  \cite{jaeger2019teaching}
provides no  hard guarantees as to  how far away the  expected cost of
the  learned strategy  is  from the  optimal one.   In  this paper  we
propose  a   step-wise  partition   refinement  process,   where  each
partitioning  induces  a  finite-state   imprecise  MDP  (IMDP).
From the induced IMDP we can derive upper and lower bounds on the
expected cost of the  original infinite-state  Euclidean MDP.
As a crucial result,  we prove the correctness of these bounds, i.e., that they
are always guaranteed to contain the true
expected  cost. Also, we provide   value iteration procedures for
computing   lower  and   upper  expected   costs  of   IMDPs.
Figure~\ref{partstrat} shows upper and lower bounds on the expected cost
over the regions shown in  Figure~\ref{randomwalkfig}.

\begin{figure}
\centering
  \includegraphics[scale=0.4]{./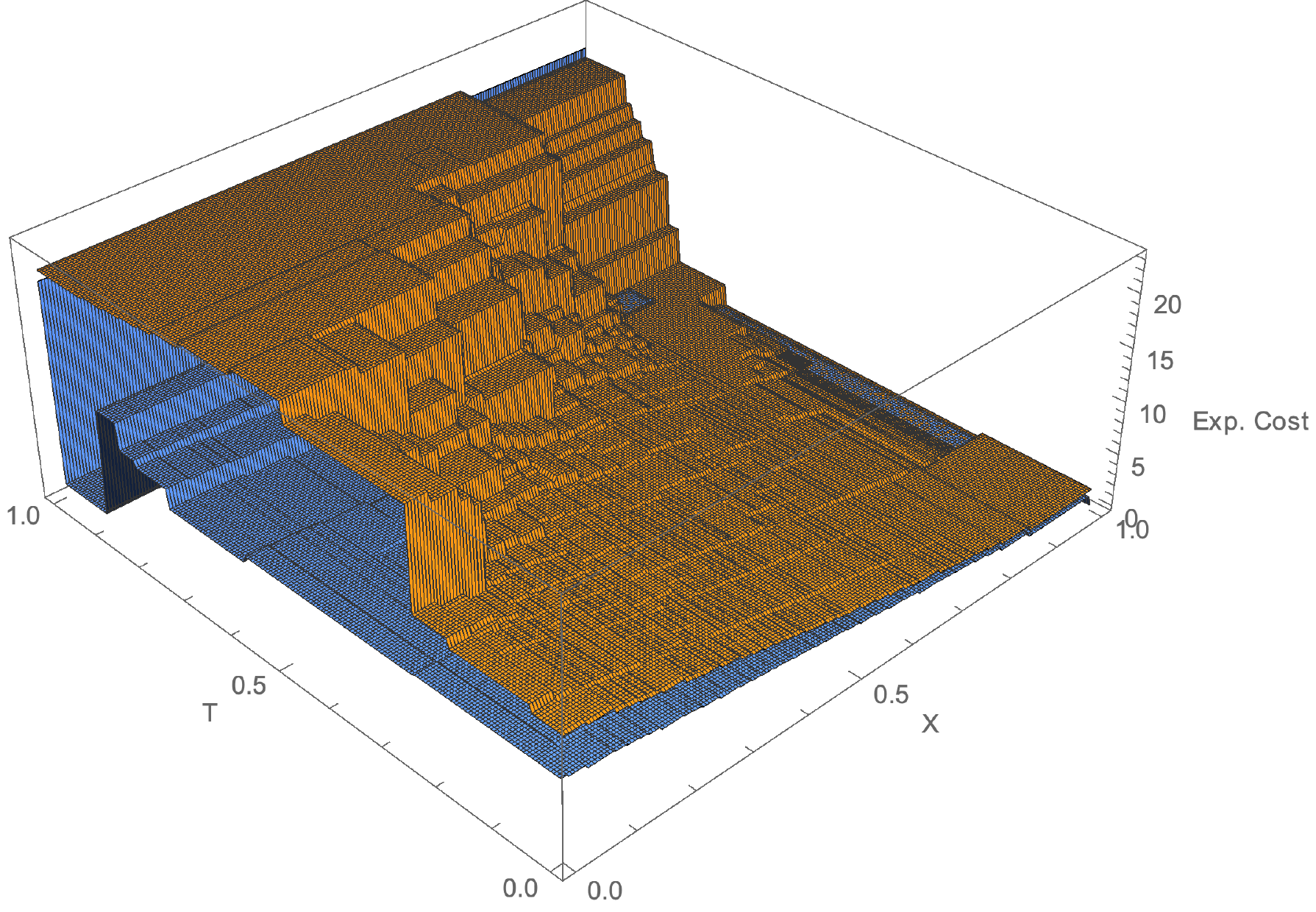} 
  \caption{Lower and upper cost bounds for the learned partition.}
  \label{partstrat}
\end{figure}

Applying the IMDP  value iteration procedures to the partition learned
by  UPPAAL STRATEGO therefore allows us to compute
guaranteed lower  and upper bounds  on the  expected cost, and thereby
validate the results of reinforcement learning. 
The main contributions of this paper can by summarized as follows:
\begin{itemize}
\item We define IMDP abstractions of infinite state Euclidean MDPs, and establish
  as key theoretical properties: the correctness of value iteration to compute
  upper and lower expected cost functions, the correctness of the upper and lower
  cost functions as bounds on the cost function of the original Euclidean MDP, and,
  under a restriction to finite time horizons, the convergence of upper and lower
  bounds to the actual cost values.
\item  We demonstrate the applicability of the general framework to analyze the
  accuracy of strategies learned by reinforcement learning.
\end{itemize}


\paragraph{Related Work.}

Our work is closely related to various types of MDP models proposed in different
areas. 
\emph{Imprecise Markov Chains and Imprecise Markov Decision processes} have been
considered in areas such as operations research and
artificial intelligence~\cite{white1994markov,crossman2009imprecise,troffaes2015using}.
The focus here typically is on approximating optimal policies for fixed, finite state
spaces.  In the same spirit,  but from a
verification point of view, \cite{chen2013complexity} focuses on reachability probabilities.

\emph{Lumped Markov chains} are obtained by aggregating sets of states of a Markov Chain into a single
state. Much work is devoted to the question of when and how the resulting process
again is a Markov chain (it rarely is)~\cite{rubino1991finite,derisavi2003optimal}.
The interplay of  lumping and imprecision is considered in~\cite{erreygers2018computing}
Most work in this area is concerned with finite state spaces.
Abstraction by state space partitioning (lumping) can be understood as a special form of
\emph{partial observability} (one only observes which  partition element the current state belongs to).
A combination or partial observability with imprecise probabilities is considered
in~\cite{itoh2007partially}

\cite{DBLP:conf/qest/KwiatkowskaNP06} introduce \emph{abstractions of finite state MDPs} by partitioning
  the state space. Upper and lower
  bounds for reachability probabilities are obtained from the abstract MDP, which is formalized as
  a two player stochastic game. \cite{DBLP:conf/adhs/LunWDA18} is  concerned with obtaining
  accurate specifications of an abstraction obtained by state space partitioning. The underlying
  state space is
  finite, and a fixed partition is given.  

  Thus, while there is a large amount of closely related work on abstracting MDPs by state space partitioning, and
  imprecise MDPs that can result from such an abstraction, to the best of our knowledge, our work is distinguished from
  previous work by: the consideration of infinite continuous state spaces for the underlying models of primary interest,
  and the focus on the properties of refinement sequences induced by partitions of increasing granularity.

\section{Euclidean MDP and Expected Cost}
\begin{definition}[Euclidean Markov Decision Processes]\label{def:mdp}
  A      Euclidean Markov decision process        (EMDP)
  is              a              tuple
  $\mdp=(\states, \goal, \act, \transitions, \cost)$ where:
\begin{itemize}
\item $\states \subseteq  \reals^K$ is a
  measurable subset of the $K$-dimensional Euclidean space equipped with the Borel $\sigma$-algebra
  $\borel^K$. 
\item $\goal\subseteq\states$ is a measurable set of goal states,
\item $\act$ is a finite set of actions,
\item    $T:\states    \times   \act  \times\borel^K \rightarrow[0,1]$ defines for every $a\in\act$ a
  transition kernel on $(\states,\borel^K)$,
  i.e., $T(s,a,\cdot)$ is a probability distribution on $\borel^K$ for all
  $s\in\states$, and $T(\cdot,a,B)$ is measurable for all $B\in\borel^K$.
  Furthermore, the set of goal states is absorbing, i.e. for all $s\in\goal$ and all $a\in\act$:
  $T(s,a,\goal)=1$.
\item  $\cost:\states\times\act\rightarrow\reals_{\geq  0}$   is  a
  cost-function for state-action pairs, such that for all $a\in\act$:
  $\cost(\cdot,a)$ is measurable, and
  $\cost(s,a)=0$ for all $s\in\goal$.
\end{itemize}
\end{definition}

A run $\run$ of an MDP is a sequence of alternating states and actions
$s_1a_1s_2a_2\cdots$. 
We denote the set of all  runs of an EMDP
$\mdp$   as  $\runs_\mdp$.
We use $\run_i$ to denote $(s_i, a_i)$, $\run_{\leq i}$ for the prefix
$s_{1}a_{1}s_{2}a_{2}\cdots s_{i}a_{i} $,  and 
 $\run_{>i}$  for the  tail  $s_{i+1}a_{i+1}s_{i+2}a_{i+2}\cdots $ of a run.
The \emph{cost} of a run is
\begin{equation*}
\costinfty(\run) := \sup_{N} \sum_{i = 1}^N \cost(\pi_i) \in[0,\infty] \,.
\end{equation*}

The set $\runs_\mdp$ is equipped with the product $\sigma$-algebra
$(\borel^K\otimes 2^{\act})^{\infty}$ generated by the cylinder sets
$B_1\times\{a_1\}\times \cdots \times B_n\times\{a_n\}\times(\states\times\act)^{\infty}$ ($n\geq 1$, $B_i\in\borel^K$, $a_i\in\act$).
We denote with $\borel_+$ the Borel $\sigma$-algebra restricted to the non-negative reals, and
with $\bar{\borel}_+$ the standard extension  to
  $\bar{\reals}_{\geq 0}:=\reals_{\geq 0}\cup\{\infty\}$, i.e. the sets of the form $B$ and $B\cup\{\infty\}$, where
   $B\in\borel_+$. 
  \begin{restatable}{lemma}{measurable}
    $\costinfty$ is $(\borel^K\otimes 2^{\act})^{\infty} - \bar{\borel}_+$ measurable.
  \end{restatable}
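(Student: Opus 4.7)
The plan is to follow the standard measurability ladder: projections, costs at a single step, finite partial sums, and then a countable supremum.

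First I would verify that for each $i\geq 1$ the coordinate projection $\pi\mapsto \pi_i=(s_i,a_i)$ is measurable from $(\runs_\mdp,(\borel^K\otimes 2^\act)^\infty)$ to $(\states\times\act,\borel^K\otimes 2^\act)$. This is immediate from the definition of the cylinder-generated product $\sigma$-algebra: preimages of rectangles $B\times\{a\}$ are basic cylinders. Next I would show that $\cost:\states\times\act\to\reals_{\geq 0}$ is $\borel^K\otimes 2^\act$–$\borel_+$ measurable. Since $\act$ is finite and $2^\act$ is the full power set, any $U\in\borel_+$ satisfies
\begin{equation*}
\cost^{-1}(U)\;=\;\bigcup_{a\in\act}\bigl(\cost(\cdot,a)^{-1}(U)\bigr)\times\{a\},
\end{equation*}
and each $\cost(\cdot,a)^{-1}(U)$ is in $\borel^K$ by the assumption in Definition~\ref{def:mdp} that $\cost(\cdot,a)$ is measurable. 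A finite union of measurable rectangles is measurable, so $\cost$ is measurable.

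Composing, $\pi\mapsto\cost(\pi_i)$ is a measurable $\bar{\reals}_{\geq 0}$–valued map for each $i$. Finite sums of non-negative measurable functions are measurable, hence $S_N(\pi):=\sum_{i=1}^N\cost(\pi_i)$ is measurable into $\bar{\borel}_+$ for every $N$. Because every summand is non-negative, the partial sums are monotonically non-decreasing in $N$, so
\begin{equation*}
\costinfty(\pi)\;=\;\sup_{N\in\naturals}S_N(\pi)\;=\;\lim_{N\to\infty}S_N(\pi).
\end{equation*}
A countable supremum of $\bar{\borel}_+$–measurable functions is $\bar{\borel}_+$–measurable, since $\{\pi:\costinfty(\pi)\leq t\}=\bigcap_N\{\pi:S_N(\pi)\leq t\}$ for every $t\in\bar{\reals}_{\geq 0}$, and the sets on the right are measurable. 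This yields the claim.

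The only delicate point is handling the product $\sigma$-algebra on $\states\times\act$ correctly so that $\cost$ is jointly measurable in $(s,a)$ rather than only separately measurable in $s$ for each fixed $a$; the finiteness of $\act$ together with $2^\act$ being the full power set makes this routine, as shown above. Everything else is a standard application of the stability of measurability under countable suprema of non-negative measurable functions.
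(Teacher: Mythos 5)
Your proof is correct and follows essentially the same route as the paper's: measurability of the single-step costs $\pi\mapsto\cost(\pi_i)$, then of the finite partial sums, then of their supremum. You merely spell out in more detail (coordinate projections, joint measurability of $\cost$ via finiteness of $\act$) what the paper compresses into the phrase ``according to the measurability condition on $\cost$''.
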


  Due to space constraints  proofs  are only included in the extended online version of this paper.

  We next consider strategies for EMDPs. We limit ourselves to memoryless and stationary strategies, noting
  that on the rich Euclidean state space $\states$ this is less of a limitation than on finite state spaces,
  since a non-stationary, time dependent strategy can here be turned into a stationary strategy by adding
  one real-valued dimension representing time. 
  
\begin{definition}[Strategy]
  A (memoryless,stationary) strategy for an MDP $\mdp$ is a function
  $\sigma:\states\rightarrow(\act\rightarrow[0,1])$, mapping states to  probability
  distributions over $\act$, such that for every $a\in\act$ the function
  $s\in\states\mapsto \sigma(s)(a)$ is measurable.
\end{definition}

The following lemma is mostly a technicality that needs to be established in order to
ensure that an MDP in conjunction with a strategy and an initial state distribution
 defines a Markov process on $\states\times\act$, and
hence a probability distribution on $\runs_\mdp$.
\begin{restatable}{lemma}{strategykernel}
  \label{lem:strategykernel}
  If $\sigma$\ is a strategy, then
  \begin{equation}
    \label{eq:Tsigma}
    \begin{array}{lll}
      T_\sigma: & (\states\times\act)\times (\borel^K\times 2^{\act})    & \rightarrow [0,1] \\
      & ( (s,a),(B,A)  ) & \mapsto \int_B \sigma(s')(A) T(s,a,ds')
    \end{array}
  \end{equation}
  is a transition kernel on $(\states\times\act,\borel^K\times 2^{\act})$. 
\end{restatable}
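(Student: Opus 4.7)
The plan is to verify the two defining properties of a transition kernel separately: (i) for each fixed $(s,a)$, the given formula extends to a probability measure on $\borel^K\otimes 2^{\act}$, and (ii) for each fixed measurable set $C\in\borel^K\otimes 2^{\act}$, the map $(s,a)\mapsto T_\sigma((s,a),C)$ is measurable. Since $\act$ is finite, $2^{\act}$ is the full power set and every $C\in\borel^K\otimes 2^{\act}$ decomposes uniquely as a finite disjoint union $C=\bigsqcup_{a'\in\act} C_{a'}\times\{a'\}$ with $C_{a'}\in\borel^K$; this observation trivialises most product-space bookkeeping.

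For (i), fix $(s,a)$ and define
\[
T_\sigma\bigl((s,a),C\bigr)\;:=\;\sum_{a'\in\act}\int_{C_{a'}}\sigma(s')(a')\,T(s,a,\dee s').
\]
Each integrand $s'\mapsto \sigma(s')(a')\mathbf{1}_{C_{a'}}(s')$ is $\borel^K$-measurable by the definition of a strategy, so the integrals are well-defined. Countable additivity follows by applying monotone convergence term by term in the (finite) sum over $a'$. The total mass equals $\int_\states \sum_{a'\in\act}\sigma(s')(a')\,T(s,a,\dee s')=\int_\states 1\,T(s,a,\dee s')=1$, using that $\sigma(s')$ is a probability distribution on $\act$ and that $T(s,a,\cdot)$ is a probability measure on $(\states,\borel^K)$. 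On rectangles $B\times A$ this agrees with (\ref{eq:Tsigma}) since $\sigma(s')(A)=\sum_{a'\in A}\sigma(s')(a')$.

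For (ii), since $\act$ is finite (and carries the discrete $\sigma$-algebra), it suffices to check that for each fixed $a\in\act$ and each $C\in\borel^K\otimes 2^{\act}$ the map $s\mapsto T_\sigma((s,a),C)$ is $\borel^K$-measurable. I would first prove the auxiliary statement that for every bounded measurable $f:\states\to\reals_{\geq 0}$ the map $s\mapsto\int f(s')\,T(s,a,\dee s')$ is $\borel^K$-measurable. For indicators $f=\mathbf{1}_{B'}$ this is just $s\mapsto T(s,a,B')$, which is measurable by the kernel property of $T$. The statement then extends to simple functions by linearity and to all bounded nonnegative measurable $f$ by monotone convergence. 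Applying this with $f(s')=\mathbf{1}_{C_{a'}}(s')\sigma(s')(a')$ and summing over the finitely many $a'\in\act$ gives measurability of $s\mapsto T_\sigma((s,a),C)$.

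The only mildly delicate point, and the one I would flag as the main obstacle, is ensuring the interchange of sum/integral and the measurability of the parameterised integrals $s\mapsto \int f(s')\,T(s,a,\dee s')$; both are standard once one recognises that the required stability properties follow from a monotone-class / standard machine argument built on top of the kernel property of $T$ and the measurability clause in the definition of a strategy. No genuinely new ideas beyond these routine measure-theoretic facts are needed, and the finiteness of $\act$ eliminates what would otherwise be the subtle part of handling the product $\sigma$-algebra.
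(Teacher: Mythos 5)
Your proposal is correct and follows essentially the same route as the paper's proof: reduce measurability in $(s,a)$ to singleton actions (equivalently, the finite decomposition $C=\bigsqcup_{a'}C_{a'}\times\{a'\}$), then obtain measurability of $s\mapsto\int \sigma(s')(a')\mathbf{1}_{C_{a'}}(s')\,T(s,a,\dee s')$ by the standard machine (indicators, simple functions, monotone convergence), using the kernel property of $T$ and the measurability clause in the definition of a strategy. The only cosmetic difference is that you spell out the probability-measure part and treat general sets of the product $\sigma$-algebra, where the paper notes the measure property ``by construction'' and checks rectangles, approximating $\sigma(\cdot)(a')$ itself by simple functions.
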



Usually, an initial state distribution
will be given by a fixed initial state $s=s_1$. We then denote the resulting distribution over
$\runs_\mdp$ by  $\pruns{s,\sigma}$ (this also depends on the underlying $\mdp$; to avoid
notational clutter, we do not always make this dependence explicitly in the notation). 

\begin{definition}[Expected Cost]
  Let $s\in\states$. The expected cost at $s$ under strategy $\sigma$ is the expectation
  of $\costinfty$ under the distribution  $\pruns{s,\sigma}$, denoted
  $\expected_{\sigma}(\cost, s)$. The expected cost at initial state $s$ then is
  defined as
  \begin{equation*}
    \expected(\cost, s) :=\inf_{\sigma}\, \expected_{\sigma}(\cost, s) \in [0,\infty] \,.
  \end{equation*}
\end{definition}

\begin{example}
  If $s\in\goal$, then for any strategy $\sigma$: $\pruns{s,\sigma}(\bigcap_{i\geq 1}\{s_i\in\goal\}  )=1$,
  and hence $\expected(\cost, s) =0$. However, $\expected(\cost, s) = 0$ can also hold for
  $s \not\in\goal$, since $\cost(s,a)=0$ also is allowed for non-goal states $s$. 
\end{example}

Note that, for any strategy $\sigma$, the functions $\expected_\sigma(\cost, \cdot)$ and 
$\expected(\cost, \cdot)$ are $[0,\infty]$-valued measurable functions on $\states$. This follows by measurability of $\cost(\cdot,a)$ and $\sigma(\cdot)(a)$, for all $a \in\act$, and \cite[Theorem 13.4]{Billingsley1986}.

\subsection{Value Iteration for EMDPs}
\label{sec:vi4EMDPs}

We next show that expected costs in EMDPs can be computed by value iteration. Our results are
closely related to Theorem 7.3.10 in~\cite{Puterman2005}. However, our scenario differs from the one
treated by Puterman~\citep{Puterman2005} in that we deal with uncountable state spaces, and in that
we want to permit infinite cost values.
Adapting Puterman's notation~\cite{Puterman2005}, we introduce two operators, 
$\viop$ and $\viop^\sigma$, on $[0,\infty]$-valued measurable functions $E$ on $\states$, defined 
as follows:
\begin{align*}
  \label{eq:Loperator}
  \viop E(s) &:=  \min_{a\in\act} \left( \cost(s,a) +
  \int_{t \in \states} E(t) \,  T(s,a, \dee t)  \right) \,,
  \\
  \viop^\sigma E(s) &:=   \sum_{a \in \act} \sigma(s)(a) \cdot \left( \cost(s,a) + \int_{t \in \states} E(t) \,  T(s,a, \dee t)  \right) \,,
\end{align*}

The operators above are well-defined:
\begin{restatable}{lemma}{LEmeasurable}
\label{lem:LEmeasurable}  
If $E$ is measurable, so are $\viop E$ and $\viop^\sigma E$.
\end{restatable}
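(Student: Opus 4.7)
The plan is to reduce the lemma to the standard fact that integrating a non-negative measurable function against a transition kernel yields a measurable function in the parameter, and then combine this with elementary closure properties (sums, products, finite minima) of measurable functions.

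First, I would fix $a \in \act$ and define $F_a(s) := \int_\states E(t)\,T(s,a,\dee t)$, and argue that $F_a$ is a $[0,\infty]$-valued measurable function of $s$. I would prove this by the standard monotone-class / simple-function machine. For $E = \mathbf{1}_B$ with $B \in \borel^K$, the integral equals $T(s,a,B)$, which is measurable in $s$ by the defining property of the transition kernel in Definition~\ref{def:mdp}. By linearity, $F_a$ is measurable whenever $E$ is a non-negative simple function. A general non-negative measurable $E$ with values in $[0,\infty]$ is the pointwise increasing limit of simple functions $E_n \uparrow E$, and by the monotone convergence theorem (applied to the probability measure $T(s,a,\cdot)$ for each fixed $s$) we have $F_a(s) = \lim_n \int E_n(t)\,T(s,a,\dee t)$. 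Hence $F_a$ is the pointwise limit of measurable functions, and therefore measurable.

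Next, since $\cost(\cdot,a)$ is measurable by Definition~\ref{def:mdp}, the sum $G_a(s) := \cost(s,a) + F_a(s)$ is measurable as a sum of two $[0,\infty]$-valued measurable functions (using the standard conventions for arithmetic on $[0,\infty]$). For $\viop E$, measurability then follows because $\act$ is finite and the pointwise minimum of finitely many measurable functions is measurable: $\viop E(s) = \min_{a\in\act} G_a(s)$. For $\viop^\sigma E$, by the definition of a strategy, $s \mapsto \sigma(s)(a)$ is measurable for each $a \in \act$, so each product $\sigma(s)(a)\cdot G_a(s)$ is measurable (again with $0\cdot\infty := 0$), and the finite sum over $a \in \act$ is measurable as well.

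The main obstacle I anticipate is the care required for $[0,\infty]$-valued integrands, in particular because Puterman's standard treatment works with real-valued bounded $E$. The key point is that allowing $E$ to take the value $\infty$ does not break the standard-machine argument: monotone convergence holds with values in $[0,\infty]$, and the arithmetic conventions ($0\cdot\infty=0$, $x+\infty=\infty$) preserve measurability of sums and products. Once $F_a$ is known to be measurable in this extended sense, everything else reduces to the fact that $\act$ is finite, so that minima and sums over $\act$ stay within the class of measurable functions without any further limiting argument.
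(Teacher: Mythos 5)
Your proposal is correct and follows essentially the same route as the paper's proof: approximate $E$ by an increasing sequence of simple functions, use that $s\mapsto T(s,a,C)$ is measurable by Definition~\ref{def:mdp}, pass to the limit via monotone convergence, and conclude with measurability of $\cost(\cdot,a)$, $\sigma(\cdot)(a)$, and finite minima/sums. The only difference is presentational (you spell out the indicator/simple-function base case and the $[0,\infty]$ arithmetic conventions a bit more explicitly), so there is nothing to add.
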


The set of $[0,\infty]$-valued measurable functions on $\states$ forms a complete partial order
under the point wise order $E \leq E'$ iff $E(s) \leq E'(s)$, for all $s \in \states$. The top $\top$ and 
bottom $\bot$ are respectively given by the constant functions $\top(s) := \infty$,  $\bot(s) := 0$, for $s \in \states$. Meet and join are the point-wise infimum and point-wise supremum,  respectively. 
By their definition, it is easy to see that both $\viop$ and $\viop^\sigma$ are monotone operators.

Since the set of actions $\act$ is finite, for every $E$ we can define a deterministic strategy $d$, 
such that $\viop E= \viop^d E$. We can establish an even stronger relation:
\begin{restatable}{lemma}{deteministicstrategies}
  \label{lem:deteministicstrategies}
$\inf_\sigma \viop^\sigma = \viop$.
\end{restatable}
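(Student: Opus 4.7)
The plan is a two-sided inequality argument, with all the work going into the measurable-selector direction.

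For the easy direction, I would first observe that $\viop^\sigma \geq \viop$ pointwise for every strategy $\sigma$. This is because, for each fixed $s$, $\viop^\sigma E(s)$ is a convex combination (with weights $\sigma(s)(a)$ summing to $1$) of the values $g_a(s) := \cost(s,a) + \int E(t) \, T(s,a,\dee t)$, each of which is at least $\min_{a'} g_{a'}(s) = \viop E(s)$. Taking infimum over $\sigma$ on the left gives $\inf_\sigma \viop^\sigma \geq \viop$.

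For the nontrivial direction, I would construct a single deterministic strategy $d$ that attains the minimum. Enumerate $\act = \{a_1,\dots,a_n\}$ and define
\[
d(s) := a_{j(s)}, \qquad j(s) := \min\bigl\{\, i \,:\, g_{a_i}(s) = \viop E(s) \,\bigr\}.
\]
Because $\act$ is finite, this minimum is always attained, so $d$ is well-defined. Identifying $d$ with the Dirac strategy $\sigma_d(s)(a) := \mathbb{1}[d(s)=a]$, a direct substitution shows $\viop^{\sigma_d} E(s) = g_{d(s)}(s) = \viop E(s)$, hence $\inf_\sigma \viop^\sigma \leq \viop^{\sigma_d} = \viop$.

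The only real obstacle is checking that $\sigma_d$ qualifies as a strategy in the sense of the paper, i.e.\ that $s\mapsto \sigma_d(s)(a)$ is measurable for every $a$. Each $g_{a_i}$ is measurable by Lemma~\ref{lem:LEmeasurable} (this is exactly what ensures $\viop E$ is measurable as well), so the sets
\[
S_i := \{\, s \,:\, g_{a_i}(s) = \viop E(s) \,\} = \{\, s : g_{a_i}(s) - \viop E(s) \leq 0 \,\}
\]
are measurable, and $d^{-1}(a_i) = S_i \setminus \bigcup_{k<i} S_k$ is a Boolean combination of measurable sets, hence measurable. Therefore $s\mapsto \mathbb{1}[d(s)=a_i]$ is measurable for every $i$, and $\sigma_d$ is a legitimate strategy, completing the argument.
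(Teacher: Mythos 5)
Your argument is correct and is essentially the paper's own proof: the easy inequality comes from the convex-combination bound $\viop E(s) = \sum_{a\in\act}\sigma(s)(a)\,\viop E(s) \leq \viop^\sigma E(s)$, and the other direction from a deterministic minimizing selector, whose existence the paper simply asserts from finiteness of $\act$ just before the lemma. Your explicit measurability check for the selector is a detail the paper leaves implicit and is welcome; the only small caveat is that, since $E$ is $[0,\infty]$-valued, you should define $S_i$ as $\{s : g_{a_i}(s) \leq \viop E(s)\}$ directly rather than via the difference $g_{a_i}(s)-\viop E(s)$, which is ill-defined when both terms are $\infty$ (measurability of $S_i$ holds either way).
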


As a first main step we can show that 
the expected cost under the strategy $\sigma$ is a fixed point for the operator $\viop^\sigma$:
\begin{restatable}{proposition}{volterraStrategy}
  \label{prop:volterraStrategy}
For any strategy $\sigma$, $\expected_\sigma(\cost, \cdot) = \viop^\sigma\expected_\sigma(\cost, \cdot)$.
\end{restatable}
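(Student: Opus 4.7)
The plan is to establish the fixed-point identity by decomposing $\costinfty$ into the first-step cost and the tail cost, and then applying the Markov property together with monotone convergence (which is available because all costs are non-negative).

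First, I would observe that $\costinfty(\run) = \cost(\pi_1) + \costinfty(\run_{>1})$ in $[0,\infty]$, which is immediate from the definition as a supremum of partial sums. Since all summands are non-negative, linearity of Lebesgue integration on $[0,\infty]$-valued functions gives
\[
\expected_\sigma(\cost, s) = \expected_\sigma[\cost(\pi_1)] + \expected_\sigma[\costinfty(\run_{>1})].
\]
The first summand is handled directly: conditional on $s_1 = s$, the action $a_1$ is distributed according to $\sigma(s)$, so $\expected_\sigma[\cost(\pi_1)] = \sum_{a\in\act}\sigma(s)(a)\,\cost(s,a)$.

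Next I would handle the second summand via the Markov property induced by the kernel $T_\sigma$ of Lemma~\ref{lem:strategykernel}. By the Ionescu--Tulcea construction (which is what makes $\pruns{s,\sigma}$ well-defined in the first place), the conditional distribution of $\run_{>1}$ given $(s_1,a_1,s_2)=(s,a,t)$ coincides with $\pruns{t,\sigma}$ on the shifted coordinates. Hence, using Fubini/Tonelli on the non-negative integrand together with the measurability of $\expected_\sigma(\cost,\cdot)$ (noted right after the Example),
\[
\expected_\sigma[\costinfty(\run_{>1})] = \sum_{a\in\act}\sigma(s)(a) \int_{t\in\states} \expected_\sigma(\cost, t)\, T(s,a,\dee t).
\]
Adding the two summands recovers exactly $\viop^\sigma \expected_\sigma(\cost,\cdot)(s)$, yielding the desired fixed-point equation.

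The main obstacle is the rigorous justification of the conditional-distribution step, since the state space is uncountable and the expected cost may take the value $+\infty$. To avoid invoking regular conditional probabilities directly, a cleaner route is to prove the identity first for the truncated costs $\costinfty_N(\run):=\sum_{i=1}^N\cost(\pi_i)$, where the decomposition $\costinfty_N(\run) = \cost(\pi_1) + \costinfty_{N-1}(\run_{>1})$ reduces the integral over $\runs_\mdp$ to an iterated integral against the kernels $T(s,a,\cdot)$ and the finite measures $\sigma(s)$, yielding the identity on bounded $N$-step cumulative costs by a straightforward unfolding of $\pruns{s,\sigma}$ on cylinder sets. Taking $N\to\infty$ and applying the monotone convergence theorem on both sides (legitimate because $\costinfty_N \nearrow \costinfty$ and all integrands are non-negative and measurable) then transfers the identity to $\costinfty$, giving $\expected_\sigma(\cost,\cdot) = \viop^\sigma \expected_\sigma(\cost,\cdot)$.
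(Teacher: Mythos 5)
Your proposal is correct and follows essentially the same route as the paper's proof: decompose $\costinfty$ into first-step cost plus tail cost, compute the first-step term from $\sigma(s)$, and express the tail expectation via the Markov structure of $\pruns{s,\sigma}$ (the paper's ``change of variable''), with monotone convergence handling the possibly infinite, non-negative costs. Your truncation-to-$N$-steps argument is just a more explicit justification of the step the paper performs directly.
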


As a corollary of Lemma~\ref{lem:deteministicstrategies} and Proposition~\ref{prop:volterraStrategy}, $\expected(\cost, \cdot)$ is a pre-fixpoint of the $\viop$ operator. 
Moreover, we can show that it is the least pre-fixpoint of $\viop$.


\begin{restatable}{proposition}{leastprefixpoint}
\label{prop:leastprefixpoint}
$\expected(\cost, \cdot) \geq \viop \expected(\cost, \cdot)$. Moreover, 
if $E \geq \viop E$, then $E \geq \expected(\cost, \cdot)$.
\end{restatable}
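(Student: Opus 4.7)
The plan is to derive both inequalities from Proposition~\ref{prop:volterraStrategy} and Lemma~\ref{lem:deteministicstrategies}, exploiting monotonicity of the operators $\viop$ and $\viop^\sigma$ together with a Picard-style iteration for the second part.

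For the first inequality, I would first observe that every $\viop^\sigma$ is monotone on $[0,\infty]$-valued measurable functions, since the integral preserves pointwise order for non-negative integrands. For any strategy $\sigma$, $\expected_\sigma(\cost,\cdot) \geq \expected(\cost,\cdot)$ pointwise by the definition of $\expected(\cost,\cdot)$ as an infimum, so combining monotonicity with Proposition~\ref{prop:volterraStrategy} gives
\begin{equation*}
  \expected_\sigma(\cost,\cdot) = \viop^\sigma \expected_\sigma(\cost,\cdot) \geq \viop^\sigma \expected(\cost,\cdot) \geq \inf_\tau \viop^\tau \expected(\cost,\cdot) = \viop \expected(\cost,\cdot),
\end{equation*}
where the last equality is Lemma~\ref{lem:deteministicstrategies}. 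Taking the infimum over $\sigma$ on the left yields $\expected(\cost,\cdot) \geq \viop \expected(\cost,\cdot)$.

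For the second statement, I would start by exploiting finiteness of $\act$: given $E \geq \viop E$, the pointwise minimum in $\viop E$ is attained by a deterministic, measurable strategy $d$ (the selection discussed just before Lemma~\ref{lem:deteministicstrategies}), so $E \geq \viop^d E$. Iterating the monotone operator $\viop^d$ and using $E \geq 0 = \bot$ gives $E \geq (\viop^d)^n E \geq (\viop^d)^n \bot$ for every $n$. An inductive unfolding, relying on the transition kernel $T_d$ of Lemma~\ref{lem:strategykernel}, identifies $(\viop^d)^n \bot(s)$ with the expected cost accumulated in the first $n$ steps under $\pruns{s,d}$, i.e.\ $\expected_{\pruns{s,d}}\bigl[\sum_{i=1}^{n} \cost(\pi_i)\bigr]$. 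Since $\cost \geq 0$, this is a non-decreasing sequence in $n$, and by the monotone convergence theorem it converges to $\expected_d(\cost,s)$. Passing to the limit yields $E \geq \expected_d(\cost,\cdot) \geq \expected(\cost,\cdot)$.

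The step I expect to require the most care is the identification of $(\viop^d)^n \bot$ with the $n$-step expected cost, as this involves an induction combined with repeated application of Tonelli's theorem on the product space $(\states\times\act)^n$ to match the unfolded integral against the law $\pruns{s,d}$. Checking that the selection $d$ is a legitimate (measurable) strategy is by contrast straightforward because $\act$ is finite: one can set $d(s)$ to the smallest action (in some fixed total order on $\act$) attaining the minimum, and the preimages $\{s : d(s) = a\}$ are measurable since they are defined by measurable comparisons between $\cost(\cdot,a) + \int E \, T(\cdot,a,\dee t)$ for finitely many $a$.
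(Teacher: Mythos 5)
Your proposal is correct and follows essentially the same route as the paper: the first inequality is derived exactly as in the paper from Proposition~\ref{prop:volterraStrategy}, Lemma~\ref{lem:deteministicstrategies} and monotonicity, and the second by selecting a deterministic $d$ with $\viop E=\viop^d E$, iterating, comparing with $n$-step expected costs, and applying monotone convergence. The only cosmetic difference is that you identify $(\viop^d)^n\bot$ exactly with the $n$-step expected cost and invoke $E\geq\bot$, whereas the paper proves by induction the inequality $(\viop^\sigma)^n E(s)\geq\int\sum_{i=1}^n\cost(\pi_i)\,\pruns{s,\sigma}(\dee\pi)$ directly for arbitrary non-negative $E$; both hinge on the same inductive unfolding.
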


By Proposition~\ref{prop:leastprefixpoint} and Tarski fixed point 
theorem,  $\expected(\cost, \cdot)$ is the least fixed point of $\viop$. 
The following theorem, provides us with a stronger result, namely, that $\expected(\cost, \cdot)$  
is the supremum of the point-wise increasing chain
\begin{equation*}
  \bot \leq \viop \bot \leq \viop^2 \bot \leq \viop^3 \bot \leq \dots
\end{equation*}

We denote
\begin{equation}
  \label{eq:defLk}
  L^n:= \viop^n \bot\ (n\geq 1),\ \ \ \mbox{and}\ \ L:=\sup_{n \geq 0}  L^n
\end{equation}
The following theorem then states that value iteration converges to $\expected(\cost,\cdot)$.

\begin{restatable}{theorem}{policyiter}
  \label{theo:policyiter}
$\expected(\cost,\cdot) = L$.
\end{restatable}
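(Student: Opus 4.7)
The plan is to establish the two inequalities $L \leq \expected(\cost,\cdot)$ and $L \geq \expected(\cost,\cdot)$ separately, leveraging the fixed-point machinery of Proposition~\ref{prop:leastprefixpoint}.

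For the easy direction $L \leq \expected(\cost,\cdot)$, I would argue by induction on $n$ that $L^n \leq \expected(\cost,\cdot)$. The base case $\bot \leq \expected(\cost,\cdot)$ is trivial since $\expected(\cost,\cdot) \geq 0$. For the inductive step, I use monotonicity of $\viop$ together with the pre-fixpoint inequality $\viop \expected(\cost,\cdot) \leq \expected(\cost,\cdot)$ from Proposition~\ref{prop:leastprefixpoint}: $L^{n+1} = \viop L^n \leq \viop \expected(\cost,\cdot) \leq \expected(\cost,\cdot)$. Taking the supremum over $n$ gives $L \leq \expected(\cost,\cdot)$.

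For the reverse direction, by the second part of Proposition~\ref{prop:leastprefixpoint}, it suffices to show that $L$ is a pre-fixpoint of $\viop$. I will in fact show it is a fixed point, $\viop L = L$, by pushing the supremum defining $L$ through the operator $\viop$. Expanding the definition,
\begin{equation*}
\viop L(s) \;=\; \min_{a \in \act}\Bigl(\cost(s,a) + \int_{\states} L(t)\, T(s,a,\dee t)\Bigr).
\end{equation*}
Since $L^n \uparrow L$ pointwise and each $L^n$ is $[0,\infty]$-valued and measurable, the monotone convergence theorem justifies
\begin{equation*}
\int_{\states} L(t)\, T(s,a,\dee t) \;=\; \sup_{n}\int_{\states} L^n(t)\, T(s,a,\dee t).
\end{equation*}
Adding the constant $\cost(s,a)$ preserves this identity. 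Then, using that $\act$ is \emph{finite} and the inner quantity is nondecreasing in $n$, finite minima commute with suprema of increasing chains in the complete lattice $[0,\infty]$, so
\begin{equation*}
\viop L(s) \;=\; \sup_{n}\min_{a\in\act}\Bigl(\cost(s,a) + \int_{\states} L^n(t)\, T(s,a,\dee t)\Bigr) \;=\; \sup_{n} L^{n+1}(s) \;=\; L(s).
\end{equation*}
In particular $L \geq \viop L$, and Proposition~\ref{prop:leastprefixpoint} then yields $L \geq \expected(\cost,\cdot)$, closing the argument.

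The main technical point is the second interchange, of $\min_a$ with $\sup_n$: I would include a short lemma-style justification, observing that for a monotonically increasing chain $\{g_a(n)\}_n$ indexed by finitely many $a$, and any $\varepsilon>0$, one can pick $N$ exceeding the (finitely many) indices after which each $g_a(n)$ approximates its supremum within $\varepsilon$, whence $\min_a g_a(n)$ approximates $\min_a \sup_n g_a(n)$ within $\varepsilon$ as well; the case where some $\sup_n g_a(n) = \infty$ is handled analogously using thresholds $M\to\infty$. The monotone convergence step and the finite-$\act$ argument are the only nontrivial ingredients; everything else reduces to the fixed-point characterization already established.
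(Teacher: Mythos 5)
Your proposal is correct, and its overall skeleton matches the paper's: both directions reduce to Proposition~\ref{prop:leastprefixpoint} ($L^n \leq \expected(\cost,\cdot)$ by induction using monotonicity and the pre-fixpoint inequality, and $L \geq \expected(\cost,\cdot)$ once $L \geq \viop L$ is established). Where you genuinely diverge is in the key step $\viop L \leq L$. You prove that $\viop$ is continuous along increasing chains: monotone convergence gives $\int_\states L\, T(s,a,\dee t) = \sup_n \int_\states L^n\, T(s,a,\dee t)$, and since $\act$ is finite and each map $n \mapsto \cost(s,a)+\int L^n\,T(s,a,\dee t)$ is nondecreasing, the finite minimum commutes with the supremum (your threshold argument for this interchange, including the $\infty$ case, is sound). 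This yields $\viop L = L$ in one stroke — essentially the Kleene fixed-point argument for an $\omega$-continuous operator. The paper instead argues less directly: for a state with $L(s)<\infty$ it fixes minimizing actions $a_n$, isolates the set $\states^\infty$ of states with infinite limit cost, shows the $a_n$ eventually avoid assigning it positive mass, extracts a constantly repeating action to apply monotone convergence, and controls an error term $\Delta(s)$ to conclude $L(s) \geq \viop L(s)$. Both arguments use the same ingredients (finiteness of $\act$, monotone convergence, nonnegativity), but yours packages them as a clean continuity property and avoids the case analysis on infinite-cost states entirely; the paper's route is closer to Puterman's original treatment adapted to infinite costs, and would be the kind of argument one needs if the minimum over actions were replaced by an infimum over an infinite set, where your min–sup interchange would no longer be automatic. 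As written, for the theorem's hypotheses your proof is complete and arguably simpler.
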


\section{Imprecise MDP}

The value iteration of Theorem~\ref{theo:policyiter} is a mathematical process, not an
algorithmic one, as it is defined pointwise on the uncountable state space $\states$. 
Our goal, therefore, is to approximate the expected cost function $\expected(\cost,\cdot)$
of an EMDP by expected
cost functions on finite state spaces consisting of partitions of $\states$. In order to
retain sufficient information of the original EMDP to be able to derive provable upper and
lower bounds for $\expected(\cost,\cdot)$, we approximate the EMDP by an
\emph{Imprecise Markov Decision Processes (IMDPs)}~\cite{white1994markov}.

\begin{definition}[Imprecise Markov Decision Processes]
  \label{def:imdp}
  A finite state, imprecise Markov decision process (IMDP) is a tuple
  $\mdp=(\states, \goal, \act, \transitions^*, \cost^*)$ where:
\begin{itemize}
\item $\states$ is a finite set of states
\item $\goal\subseteq\states$ is the set of goal states,
\item $\act$ is a finite set of actions,
\item    $T^*:\states    \times   \act    \rightarrow 2^{(\states\rightarrow\reals_{\geq  0})}$ assigns to
  state-action pairs a closed
  set of probability distributions   over  $\states$;
  the set of goal states is absorbing, i.e., for all
  $s\in\goal$ and all $T(s,a)\in T^*(s,a)$: $\sum_{t\in\goal}T(s,a)(t) =1$,
\item  $\cost^*:\states\times\act\rightarrow 2^{\reals_{\geq  0}}$   assigns to
  state-action pairs a closed set of costs, such that for all $s\in\goal, a\in\act$:
  $\cost^*(s,a)= \{0\}$.
\end{itemize}
\end{definition}


Memoryless, stationary strategies $\sigma$ are defined as before. In order to turn an IMDP into a fully probabilistic
model, one also needs to resolve the choice of a transition probability distribution and cost value. 

\begin{definition}[Adversary, Lower/Upper expected cost]
\label{def:adversary}  
An \emph{adversary} $\alpha$ for an IMDP consists of two functions
\begin{displaymath}
  \begin{array}{llll}
    \alpha_T: & (s,a) & \mapsto \alpha_T(s,a)\in T^*(s,a) & ((s,a)\in\states\times\act), \\
    \alpha_C: & (s,a) & \mapsto \alpha_C(s,a)\in \cost^*(s,a) & ((s,a)\in\states\times\act).
  \end{array}
\end{displaymath}
A strategy $\sigma$, an adversary $\alpha$, and   an \emph{initial state} $s$ together
define a probability distribution
  $\pruns{s,\sigma,\alpha}$ over runs $\pi$ with $s_1=s$, and hence the expected cost
  $\expected_{\sigma,\alpha}(\cost^*(\pi),s)$. We then define the lower and upper expected cost as
    \begin{eqnarray}
      \expected^{\min}(\cost^*(\pi),s)  & :=  & \min_{\sigma} \min_{\alpha} \expected_{\sigma,\alpha}(\cost^*(\pi),s)
      \label{eq:Emin}\\
      \expected^{\max}(\cost^*(\pi),s)  & :=  & \min_{\sigma} \max_{\alpha} \expected_{\sigma,\alpha}(\cost^*(\pi),s) 
      \label{eq:Emax}
    \end{eqnarray}
\end{definition}

Since $T^*(s,a)$ and $\cost^*(s,a)$ are required to be closed sets, we can here write
$\min_{\alpha}$ and $\max_{\alpha}$ rather than $\inf_{\alpha}$, $\sup_{\alpha}$. Furthermore, the closure conditions
are needed to justify a restriction to stationary adversaries, as the following example shows (cf. also Example 7.3.2
in~\cite{Puterman2005}).

\begin{example}
  Let $\states=\{s_1,s_2,s_3\}$, $\act=\{a\}$, We write $(p_1,p_2,p_3)$ for a transition probability distribution $T$ with
  $T(s_i)=p_i$. Then let $T^*(s_1,a)=\{(p_1,p_2,p_3): p_1\in ]0,1[, p_2=1-p_1\}$,
  $T^*(s_2,a)=T^*(s_3,a)=\{(0,0,1)\}$. 
  $\cost^*(s_1,a)=\cost^*(s_3,a)=\{0\}$,  $\cost^*(s_2,a)=\{1\}$. Since there is only one action, there is only one strategy
  $\sigma$. For $i\geq 1$ let $\epsilon_i\in ]0,1[$ such that $\prod_{i=1}^{\infty}\epsilon_i = \delta >0$. Then, if the adversary at the
  $i$'th step selects transition probabilities $(\epsilon_i,1-\epsilon_i,0)$ one obtains
  $\expected^{\min}(\cost^*(\pi),s_1)=1-\delta$. For every stationary adversary the transition from $s_1$ to $s_2$ will be
  taken eventually with probability 1, so that here $\expected^{\min}(\cost^*(\pi),s_1)=1$.
\end{example}


We note that only in the case of $\expected^{\max}$ does $\alpha$ act as an ``adversary'' to the
strategy $\sigma$. In the case of $\expected^{\min}$, $\sigma$ and $\alpha$ represent co-operative
strategies. In other definitions of imprecise MDPs only the transition probabilities are
set-valued~\cite{white1994markov}. Here we also allow an imprecise cost function. Note, however, that
for the definition of $\expected^{\min}(\cost^*,s)$ and $\expected^{\max}(\cost^*,s)$ the adversary's
strategy $\alpha_C$ will simply be to select the minimal (respectively maximal) possible costs, and that
we can also obtain $\expected^{\min}, \expected^{\max}$ as the expected lower/upper costs on
IMDPs with point-valued cost functions
\begin{displaymath}
  \begin{array}{lll}
    \cost^{\min}(s,a)& := & \min \cost^*(s,a),\\
     \cost^{\max}(s,a)& := & \max \cost^*(s,a),
  \end{array}
\end{displaymath}
where then the adversary has no choice for the strategy $\alpha_C$.

\subsection{Value Iteration for IMDPs}

We now characterize $\expected^{\min},\expected^{\max}$ as limits of  value iteration,
again following the strategy of the proof of Theorem 7.3.10 of~\cite{Puterman2005}.
In this case, the proof has to be adapted to accommodate the additional optimization of the
adversary, and, as in Section~\ref{sec:vi4EMDPs}, to allow for infinite costs.
We again start by defining suitable operators $\viop^{\min}, \viop^{\max}$ on
$[0,\infty]$-valued  functions $C$ defined on $\states$:

\begin{equation}
  \label{eq:Loperator}
  (\viop^{\minmax} C)(s) :=  \min_{a\in\act}\left( \cost^{\minmax}(s,a)+
       \minmax_{T\in T^*(s,a)}\sum_{s'}T(s') C(s')  \right), 
   \end{equation}
   where $\minmax\in\{\min,\max\}$.
   The mapping
   \begin{equation}
     \label{eq:alphaTC}
    \alpha_T^{\minmax}(C): (s,a)\mapsto \arg\!\!\!\!\!\minmax_{T\in T^*(s,a)}\sum_{s'}T(s') C(s')
   \end{equation}
   defines the $\alpha_T$ of an adversary. Similarly
   \begin{equation}
     \label{eq:sigmaC}
     \sigma^{\minmax}(C): s\mapsto \arg\min_{a\in\act}\left( \cost^{\minmax}(s,a)+
       \sum_{s'} \alpha_T^{\minmax}(C)(s,a)C(s')  \right)
   \end{equation}
   defines a strategy.

   Let $\bot$ be the function that is constant 0 on $\states$. Denote
   \begin{equation}
     \label{eq:defLoptn}
     L^{\minmax,n}:=(\viop^{\minmax})^n \bot, \ \ \ \mbox{and}\ \ L^{\minmax}:= \sup_{n \geq 0} L^{\minmax,n}
   \end{equation}

We can now state the applicability of value iteration for IMDPs as follows:

   \begin{restatable}{theorem}{viimdp}
     \label{theo:viimdp}
  Let $\minmax\in\{\min,\max\}$. Then
   \begin{equation}
     \label{eq:limvalit}
      \expected^{\minmax}(\cost^*(\pi),\cdot)=L^{\minmax}
   \end{equation}
 \end{restatable}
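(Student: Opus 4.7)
The plan is to mirror the structure of the proof of Theorem~\ref{theo:policyiter} and to adapt the argument of Puterman~\cite[Thm.~7.3.10]{Puterman2005} to accommodate both the adversary's additional optimization and the possibility of infinite costs. The central ingredients are: monotonicity and Scott-continuity of $\viop^{\minmax}$, a finite-horizon identification of the iterates $L^{\minmax,n}$, and a sandwich argument that uses the stationary strategy/adversary pair induced by the fixed-point equation at $L^{\minmax}$.

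First I would verify that $\viop^{\minmax}$ is monotone and continuous from below on the pointed lattice of $[0,\infty]$-valued functions on the finite set $\states$. Monotonicity is immediate from \eqref{eq:Loperator}. For continuity, the closure assumption on $T^*(s,a)$ is essential: if $C_n\uparrow C$, then by monotone convergence the inner sums $\sum_{s'}T(s')C_n(s')$ increase to $\sum_{s'}T(s')C(s')$, and because $T^*(s,a)$ is a closed subset of the simplex in $\reals^{|\states|}$ (hence compact) a subsequence extraction of the argmin/argmax transitions $T_n$ gives $\minmax_T \sum_{s'} T(s') C_n(s') \uparrow \minmax_T \sum_{s'} T(s') C(s')$. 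The outer $\min_a$ commutes with suprema because $\act$ is finite. Hence $L^{\minmax}=\sup_n L^{\minmax,n}$ is a fixed point of $\viop^{\minmax}$, and the maps $\sigma^{\minmax}(L^{\minmax})$ and $\alpha_T^{\minmax}(L^{\minmax})$ in \eqref{eq:alphaTC}--\eqref{eq:sigmaC} realize genuine stationary strategies/adversaries.

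Next, by induction on $n$, using the Bellman unfolding and the convention that at each step the strategy commits before the adversary, I would establish the finite-horizon identification
\[
L^{\minmax,n}(s) \,=\, \min_{\sigma}\minmax_\alpha\, \expected_{\sigma,\alpha}\!\Bigl[\,\textstyle\sum_{i=1}^n \cost(\pi_i)\,\Bigr].
\]
Combining monotone convergence of $\sum_{i=1}^n\cost(\pi_i)\uparrow \costinfty(\pi)$ with the elementary inequality $\sup_n \min_\sigma\minmax_\alpha f_n \leq \min_\sigma\minmax_\alpha \sup_n f_n$ then gives $L^{\minmax}\leq \expected^{\minmax}$. For the reverse inequality I would use the stationary strategy $\sigma^*:=\sigma^{\minmax}(L^{\minmax})$ extracted from the fixed-point equation: for every adversary $\alpha$, iterating $L^{\minmax}=\viop^{\minmax}L^{\minmax}$ along a trajectory gives, for each $n$,
\[
L^{\minmax}(s)\,\geq\,\expected_{\sigma^*,\alpha}\!\Bigl[\,\textstyle\sum_{i=1}^n \cost(\pi_i) + L^{\minmax}(s_{n+1})\,\Bigr]\,\geq\,\expected_{\sigma^*,\alpha}\!\Bigl[\,\textstyle\sum_{i=1}^n \cost(\pi_i)\,\Bigr],
\]
using nonnegativity of $L^{\minmax}$ in the last step. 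Passing $n\to\infty$ and applying monotone convergence yields $L^{\minmax}(s) \geq \expected_{\sigma^*,\alpha}[\costinfty(\pi)]$; then maximizing over $\alpha$ (for $\max$) or pairing $\sigma^*$ with the cooperative $\alpha^*:=\alpha_T^{\min}(L^{\min})$ (for $\min$) delivers $L^{\minmax}(s) \geq \expected^{\minmax}(s)$.

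The step I expect to be the main obstacle is the Scott-continuity of $\viop^{\max}$ in the presence of extended-real values: one must show that the attained maxima over the closed set $T^*(s,a)$ commute with increasing suprema of $\bar{\reals}_{\geq 0}$-valued integrands. Closedness (hence compactness) of $T^*(s,a)$ together with finiteness of $\states$ resolve this—when $C(s')=\infty$ for some $s'$ hit with positive mass by some $T\in T^*(s,a)$, both sides of the identity are $+\infty$ and it holds trivially, and otherwise the integrand is finite on $T^*(s,a)$ and a standard compactness/subsequence argument applies. The remainder is a notationally heavier but otherwise routine replay of the Puterman/EMDP fixed-point reasoning, adapted to the finite state setting of the IMDP.
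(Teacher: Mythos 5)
Your overall architecture matches the paper's proof in most of its essential structure: the monotonicity of the iterates, the fixed-point property of $L^{\minmax}$ obtained from the closedness (hence compactness) of the sets $T^*(s,a)$, and, for the direction $L^{\minmax}\geq\expected^{\minmax}$, the unrolling of the fixed-point equation along trajectories under the greedy stationary strategy $\sigma^{\minmax}(L^{\minmax})$, dropping the nonnegative tail term and passing to the limit. That second half is exactly the paper's argument (there phrased as: every nonnegative fixed point of $\viop^{\minmax}$ dominates $\expected^{\minmax}$). Your Scott-continuity route to the fixed-point property is a legitimate, arguably cleaner, substitute for the paper's more hands-on argument, which splits the states into those with finite and infinite $L^{\minmax}$-value, restricts to actions avoiding the infinite part, and uses uniform convergence on the finite part; one small imprecision is that your dichotomy ``some $T$ charges an infinite-value state, hence both sides are $+\infty$'' is only correct for $\viop^{\max}$, while for $\viop^{\min}$ the delicate point is the attainment and stability of minimizers, which your compactness/subsequence argument does cover.

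The genuine gap is in your ``easy'' direction $L^{\minmax}\leq\expected^{\minmax}$. The finite-horizon identification $L^{\minmax,n}(s)=\min_\sigma\minmax_\alpha\expected_{\sigma,\alpha}\bigl[\sum_{i=1}^n\cost(\pi_i)\bigr]$ is false for $\minmax=\max$ when $\sigma,\alpha$ range over the stationary strategies and adversaries of Definition~\ref{def:adversary} (which is what $\expected^{\max}$ quantifies over), and even the inequality $L^{\max,n}\leq\min_\sigma\max_\alpha\expected_{\sigma,\alpha}\bigl[\sum_{i=1}^n\cost(\pi_i)\bigr]$ that your argument needs can fail: $n$-step value iteration corresponds to a time-dependent adversary, which over a finite horizon can strictly beat every stationary one. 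Concretely, take states $s,b,g$ with $g\in\goal$, a single action $a$, point costs $1,5,0$, $T^*(s,a)=\{\delta_s,\delta_b\}$, and $b,g$ moving surely to $g$; then $L^{\max,3}(s)=7$ (stay at $s$ twice, then move to $b$), while every stationary adversary gives a $3$-step sum of at most $6$. So $\sup_n L^{\max,n}\leq\expected^{\max}$ is not established by this route (in the example it holds only because both sides are infinite); and replacing stationary by history-dependent adversaries in your identity does not help, because you would then still need that history-dependent adversaries give the same infinite-horizon value as stationary ones, which is precisely the nontrivial point. The paper closes this direction differently: it shows that $\expected^{\minmax}$ is itself a fixed point of $\viop^{\minmax}$ --- this is where the restriction to stationary, memoryless strategies and adversaries is invoked --- and then concludes $L^{\minmax,n}=(\viop^{\minmax})^n\bot\leq(\viop^{\minmax})^n\expected^{\minmax}=\expected^{\minmax}$ by monotonicity. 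You need an argument of this kind in place of the finite-horizon identification; note that your derivation does go through for $\minmax=\min$, since there $L^{\min,n}$ genuinely lower-bounds the $n$-step cost of every stationary strategy--adversary pair.
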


 We note that even though $\viop^{\minmax}$, in contrast to the $\viop$ operator for EMDPs, now
 only needs to be computed over a finite state space, we  do not obtain from
 Theorem~\ref{theo:viimdp} a fully specified algorithmic
 procedure for the computation of $\expected^{\minmax}$, because the optimization over $T^*(s,a)$ contained in (\ref{eq:Loperator}) will
 require customized solutions that depend on the structure of the $T^*(s,a)$.

\section{Approximation by Partitioning}\label{sec:approx_partition}

From now on we only consider EMDPs whose state space $\states$ is a compact
subset of $\reals^K$. 
We approximate such a Euclidean MDP by  IMDPs constructed from finite partitions
of  $\states$.
In the following, we denote with
$\partition = \{\nu_1,\ldots,\nu_{|\partition|}\}\subset  2^\states$ a finite
partition of $\states$.
We  call an element $\nu\in\partition$ a
\emph{region}  and  shall  assume  that   each  such  $\nu$  is  Borel
measurable.  For $s\in\states$ we
denote by  $[s]_\partition$ the unique region  $\nu\in\partition$ such
that  $s\in\nu$.
The \emph{diameter} of a region is $\delta(\nu):=\sup_{s,s'\in\nu}\parallel s-s'\parallel$, and
 the \emph{granularity} of a
$\partition$ is defined as  
$\delta(\partition):= \max_{\nu\in\partition} \delta(\nu) $.
We say that a partition $\partitionB$ refines
a partition $\partition$  if for any $\nu\in\partitionB$ there exist
$\mu\in\partition$     with     $\nu\subseteq\mu$.     We     write
$\partition\sqsubseteq\partitionB$ in this case.

A Euclidean MDP $\mdp= (\states, \goal, \act, \transitions, \cost)$ and
a partition $\partition$ of  $\states$ induces an
abstracting    IMDP
\cite{DBLP:conf/qest/KwiatkowskaNP06,DBLP:conf/adhs/LunWDA18} according to the following
definition. 

\begin{definition}[Induced IMDP]
  \label{def:inducedimdp}
  Let $\mdp=(\states,  \act, \initial, \transitions,  \cost,\goal)$ be
  an  MDP, and  let $\partition$  be a  finite partition  of $\states$
  consistent  with  $\goal$ in the sense that  for any $\nu\in\partition$
  either $\nu\subseteq\goal$ or  $\nu\cap\goal=\emptyset$.
  The IMDP defined by $\mdp$ and $\partition$ then is 
  $\mdp_\partition=(\partition,\goal_\partition,\act, T_\partition^*,\cost_\partition^*)$, where
  
  \begin{itemize}
  \item $\goal_\partition=\{\nu\in\partition | \nu\subseteq\goal\}$
    \item  \begin{displaymath}
    T_\partition^*(\nu,a)=\closure\{ T_\partition(s,a) \mid s\in\nu \} ,
  \end{displaymath}
  where $ T_\partition(s,a)$ is the marginal
  of $T(s,a,\cdot)$ on $\partition$, i.e. $T_\partition(s,a)(\nu')=\int_{\nu'}T(s,a,dt)$,
  and $\closure$ denotes topological closure.
  \item 
    \begin{displaymath}
      \cost_\partition^*(\nu,a) = \closure(\{ C(s,a)| s\in\nu\})
    \end{displaymath}
  \end{itemize}
 
\end{definition}

The following theorem states how an induced IMDP approximates the underlying
Euclidean MDP. In the following, we use sub-scripts on expectation operators to
identify the (I)MDPs that define the expectations. 

\begin{restatable}{theorem}{correctbounds}
\label{theo:correctbounds}
  Let $\mdp$ and  $\partition$ as in Definition~\ref{def:inducedimdp}.
  Then for all $s\in\states$:
  \begin{equation}
    \label{eq:bounds}
    \expected_{\mdp_\partition}^{\min}(\cost_\partition^*,[s]_\partition)\leq
     \expected_\mdp(\cost,s) \leq
     \expected_{\mdp_\partition}^{\max}(\cost_\partition^*,[s]_\partition).
  \end{equation}
   If  $\partition\sqsubseteq\partitionB$, then
   $\partitionB$  improves the bounds in the sense that
     \begin{eqnarray}
       \expected_{\mdp_\partition}^{\min}(\cost_\partition^*,[s]_\partition) & \leq &
       \expected_{\mdp_\partitionB}^{\min}(\cost_\partition^*,[s]_\partitionB), \label{eq:AB1}\\
       \expected_{\mdp_\partition}^{\max}(\cost_\partition^*,[s]_\partition) & \geq &
 \expected_{\mdp_\partitionB}^{\max}(\cost_\partitionB^*,[s]_\partitionB). \label{eq:AB2}
     \end{eqnarray}
 \end{restatable}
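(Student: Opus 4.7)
The plan is to derive both (\ref{eq:bounds}) and the refinement inequalities (\ref{eq:AB1})--(\ref{eq:AB2}) from the value-iteration characterizations established in Theorems~\ref{theo:policyiter} and~\ref{theo:viimdp}. Writing $L^n = \viop^n \bot$ for the EMDP chain and $L_\partition^{\minmax,n} = (\viop_\partition^{\minmax})^n \bot$ for the IMDP chain associated with $\mdp_\partition$, I would first prove by induction on $n$ that for every $s \in \states$:
\begin{equation*}
L_\partition^{\min,n}([s]_\partition) \leq L^n(s) \leq L_\partition^{\max,n}([s]_\partition).
\end{equation*}
Passing to the supremum in $n$ on all sides and invoking the two convergence theorems then yields (\ref{eq:bounds}).

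For the inductive step of the upper bound, fix $s$ and $a \in \act$. On the cost side, $\cost(s,a) \leq \max \cost_\partition^*([s]_\partition,a)$ since $\cost(s,a)$ itself belongs to the closure $\cost_\partition^*([s]_\partition,a)$ by Definition~\ref{def:inducedimdp}. On the transition side, the marginal $T_\partition(s,a)$ of $T(s,a,\cdot)$ on $\partition$ lies in $T_\partition^*([s]_\partition,a)$ by the same definition, and the induction hypothesis gives
\begin{equation*}
\int_\states L^n(t)\, T(s,a,\dee t) = \sum_{\nu' \in \partition} \int_{\nu'} L^n(t)\, T(s,a,\dee t) \leq \sum_{\nu' \in \partition} T_\partition(s,a)(\nu')\, L_\partition^{\max,n}(\nu').
\end{equation*}
Adding the cost term and taking the minimum over $a$ produces $(\viop L^n)(s) \leq (\viop_\partition^{\max} L_\partition^{\max,n})([s]_\partition)$. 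The lower-bound direction is entirely symmetric, replacing $\max$ with $\min$ and flipping the inequalities.

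For the refinement inequalities I would run an analogous induction showing $L_\partitionB^{\max,n}([s]_\partitionB) \leq L_\partition^{\max,n}([s]_\partition)$, together with the reverse inequality for the $\min$ chains. The key observation is that the marginalization map from distributions on $\partitionB$ to distributions on $\partition$, obtained by summing $T_B(\mu)$ over $\mu \in \partitionB$ with $\mu \subseteq \nu$ for each $\nu \in \partition$, is continuous. Consequently, if $T_B \in T_\partitionB^*([s]_\partitionB,a)$ is a limit of $T_\partitionB(s_k,a)$ with $s_k \in [s]_\partitionB \subseteq [s]_\partition$, then its $\partition$-marginal is the limit of $T_\partition(s_k,a)$ and hence lies in $T_\partition^*([s]_\partition,a)$. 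Likewise $\max \cost_\partitionB^*([s]_\partitionB,a) \leq \max \cost_\partition^*([s]_\partition,a)$, because the finer generating set is contained in the coarser one. Applying the induction hypothesis cell-by-cell and regrouping the $\mu \in \partitionB$ into their $\partition$-blocks converts any adversary choice for the refined IMDP into an admissible choice for the coarser IMDP, delivering the required inequality at the level of $\viop_\partitionB^{\max}$ and $\viop_\partition^{\max}$.

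The main obstacle I anticipate is the careful bookkeeping around the closures in Definition~\ref{def:inducedimdp}: the sets $T_\partition^*(\nu,a)$ and $\cost_\partition^*(\nu,a)$ need not equal the generating sets $\{T_\partition(s,a) : s \in \nu\}$ and $\{C(s,a) : s \in \nu\}$, so arguing that the specific distributions and costs arising in each inductive step actually lie in these closures, and that the optima over closures agree with those over the generators, requires the continuity of marginalization and of integration against the measurable function $L_\partition^{\max,n}$. Once this step is in place, both inductions are mechanical.
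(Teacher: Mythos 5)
Your proposal is correct and follows essentially the same route as the paper's proof: an induction over value-iteration rounds showing that the IMDP iterates $L^{\min,n}_{\mdp_\partition}$ and $L^{\max,n}_{\mdp_\partition}$ bracket the EMDP iterates $L^n$ on each region, followed by passing to the supremum via Theorems~\ref{theo:policyiter} and~\ref{theo:viimdp}. Your treatment of (\ref{eq:AB1})--(\ref{eq:AB2}) via the continuity of the marginalization map onto the coarser partition just spells out what the paper dismisses with ``the same line of argument,'' and your handling of the closure issue (only membership of the generators and marginals of limit points is needed) is sound.
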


 \newcommand{\dtv}{d_{\emph{tv}}}
\newcommand{\PNtau}[2]{P^{#1}_{\tau,\alpha^{#2}}}
\newcommand{\ENtau}[2]{\expected ^{#1}_{\tau,\alpha^{#2}}}

 Our goal now is to establish conditions under which the approximation
 (\ref{eq:bounds}) becomes arbitrarily tight for partitions of sufficiently
 high granularity. This will require certain continuity conditions for
 $\mdp$ as spelled out in the following definition. In the following, $\dtv$ stands for
 the total variation distance between distributions. Note that we will be using
 $\dtv$ both for discrete distributions on partitions $\partition$, and for continuous
 distributions on $\states$.

 \begin{definition}[Continuous Euclidean MDP]
   \label{def:continuous}
   A Euclidean MDP $\mdp$ is \emph{continuous} if
   \begin{itemize}
   \item For each $\epsilon>0$ there exists $\delta>0$, such that: for all partitions $\partition$,
     if $\delta(\partition)\leq \delta$, then for all $\nu\in\partition$, $s,s'\in\nu$, $a\in\act$:
     $\dtv(T(s,a),T(s',a))\leq \epsilon$.
     \item $\cost$ is continuous on $\states$ for all $a\in\act$.
   \end{itemize}
 \end{definition}
 
 We observe that due to the assumed compactness of $\states$,
 the first condition of Definition~\ref{def:continuous} is satisfied if
 $T$ is defined as a function $T(s,a,t)$ on $\states\times\act\times\states$ that for 
 each $a$ as a function  of $s,t$ is continuous on  $\states\times\states$, and such that 
 $T(s,a,\cdot)$ is for all $s,a$ a density function relative to Lebesgue measure. 

 We next introduce some notation for $N$-step expectations and distributions. In the
 following, we use $\tau$ to denote strategies for induced IMDPs defined on partitions $\partition$,
 whereas $\sigma$ is reserved for strategies defined on Euclidean state spaces $\states$.
 For a given partition $\partition$ and strategy $\tau$ for $\mdp_\partition$ let
 $\alpha^+,\alpha^-$ denote two strategies for the adversary (to be interpreted as strategies
 that are close to  achieving $sup_{\alpha} \expected_{\tau,\alpha}(\cost^*(\pi),\cdot)$ and
 $inf_{\alpha} \expected_{\tau,\alpha}(\cost^*(\pi),\cdot)$, respectively, even though we will not
 explicitly require properties that derive from this interpretation). We then denote 
 with $\PNtau{N}{+},\PNtau{N}{-}$ the distributions defined by $\tau,\alpha^+$ and $\tau,\alpha^-$
 on run prefixes of length $N$, and with $\ENtau{N}{+},\ENtau{N}{-}$ the corresponding expectations
 for the sum of the first $N$ costs $\sum_{i=1}^N \alpha_C^{+[-]}(\nu_i,a_i)$. The $P^N$ and $\expected^N$ also
 depend on the initial state $\nu_1$. To avoid notational clutter, we do not make this explicit in
 the notation. 
 We then obtain the following approximation guarantee:
  
 \begin{restatable}{theorem}{mainlem}
   \label{theo:mainlem}
   Let $\mdp$ be a continuous EMDP. 
   For all $N$, $\epsilon>0$ there exists $\delta>0$, such that for all partitions $\partition$ with
   $\delta(\partition)\leq \delta$, and all strategies $\tau$ defined on $\partition$:
   \begin{equation}
     \label{eq:ENeq}
     |\ENtau{N}{+}- \ENtau{N}{-}| \leq \epsilon
   \end{equation}
   and
   \begin{equation}
     \label{eq:PNeq}
     \dtv(\PNtau{N}{+},\PNtau{N}{-}) \leq \epsilon.
   \end{equation}
 \end{restatable}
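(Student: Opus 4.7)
The plan is to leverage the continuity of $\mdp$ so that, for a sufficiently fine partition, the ``wiggle room'' available to the two adversaries is uniformly small on every region, and then to propagate this single-step error through $N$-step distributions and cumulative costs. As $\states$ is compact and $\cost(\cdot,a)$ is continuous for each of the finitely many $a\in\act$, the cost is bounded by a constant $C_{\max}:=\sup_{s\in\states,a\in\act}\cost(s,a)<\infty$.

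First I would fix an auxiliary $\epsilon'>0$, to be pinned down later in terms of $N,\epsilon,C_{\max}$, and invoke continuity of $\mdp$ together with uniform continuity of each $\cost(\cdot,a)$ on the compact $\states$ to extract a common $\delta>0$ such that $\delta(\partition)\leq \delta$ entails both $\dtv(T(s,a),T(s',a))\leq \epsilon'$ and $|\cost(s,a)-\cost(s',a)|\leq \epsilon'$ for every $\nu\in\partition$, $s,s'\in\nu$, $a\in\act$. Since marginalization cannot increase total variation, $\dtv(T_\partition(s,a),T_\partition(s',a))\leq \epsilon'$; because $T^*_\partition(\nu,a)$ is the topological closure of $\{T_\partition(s,a)\mid s\in\nu\}$ and $\dtv$ is continuous, the whole set $T^*_\partition(\nu,a)$ has $\dtv$-diameter at most $\epsilon'$, and an analogous argument bounds the diameter of $\cost^*_\partition(\nu,a)$ by $\epsilon'$. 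Hence for every $(\nu,a)\in\partition\times\act$,
\begin{equation*}
\dtv\bigl(\alpha_T^+(\nu,a),\alpha_T^-(\nu,a)\bigr)\leq\epsilon'
\quad\text{and}\quad
|\alpha_C^+(\nu,a)-\alpha_C^-(\nu,a)|\leq \epsilon'.
\end{equation*}

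For inequality (\ref{eq:PNeq}) I would argue by induction on $N$, using the standard chain-rule estimate: if $P=P_1\otimes K$ and $Q=Q_1\otimes K'$ for Markov kernels $K,K'$, then $\dtv(P,Q)\leq \dtv(P_1,Q_1)+\sup_x \dtv(K(\cdot\!\mid\! x),K'(\cdot\!\mid\! x))$. Both $\PNtau{N}{+}$ and $\PNtau{N}{-}$ start from the same state $\nu_1$ and use the same strategy $\tau$, so conditional on a common history the two one-step kernels differ only through $\alpha_T^+$ versus $\alpha_T^-$; the shared $\tau$ factor cancels and the single-step TV contribution equals $\dtv(\alpha_T^+(\nu,a),\alpha_T^-(\nu,a))\leq\epsilon'$. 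Iterating yields $\dtv(\PNtau{N}{+},\PNtau{N}{-})\leq (N-1)\epsilon'\leq N\epsilon'$.

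For (\ref{eq:ENeq}), setting $f^\pm(\pi_{\leq N}):=\sum_{i=1}^N \alpha_C^\pm(\nu_i,a_i)$, I would split via the triangle inequality:
\begin{equation*}
\bigl|\ENtau{N}{+}-\ENtau{N}{-}\bigr|
\leq \|f^+\|_\infty\cdot\dtv(\PNtau{N}{+},\PNtau{N}{-}) + \int |f^+-f^-|\,d\PNtau{N}{-}.
\end{equation*}
The first summand is at most $NC_{\max}\cdot N\epsilon'$, and the second at most $N\epsilon'$ by the pointwise cost bound, so the total is at most $N\epsilon'(1+NC_{\max})$. Choosing $\epsilon':=\epsilon/\bigl(N(1+NC_{\max})\bigr)$ then forces both (\ref{eq:PNeq}) and (\ref{eq:ENeq}) below $\epsilon$. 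The main obstacle is the clean handling of the chain-rule TV bound together with the marginalization step that transports the $\epsilon'$-closeness from the $\borel^K$-measures $T(s,a)$ down to the partition-level measures; everything else is a bookkeeping exercise with continuity moduli.
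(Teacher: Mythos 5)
Your proof is correct and follows essentially the same route as the paper's: use continuity of $T$ and (uniform) continuity of $\cost$ on the compact $\states$ to ensure that on any sufficiently fine partition the two adversaries' choices on each region differ by at most $\epsilon'$ in cost and in $\dtv$, and then propagate this closeness over the $N$-step horizon. The differences are only bookkeeping: you invoke the standard chain-rule bound for total variation and a single explicit $\epsilon'=\epsilon/\bigl(N(1+NC_{\max})\bigr)$ with a one-shot decomposition of the expectation difference, whereas the paper runs a step-wise induction on $N$ using its Lemma~\ref{lem:dtvbasic} and a recursively chosen $\epsilon'$; you also spell out the marginalization-and-closure step bounding the diameters of $T^*_\partition(\nu,a)$ and $\cost^*_\partition(\nu,a)$, which the paper leaves implicit.
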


 Theorem~\ref{theo:mainlem} is a strengthening of Theorem 2 in \cite{jaeger2019teaching}. The latter applied
 to processes that are guaranteed to terminate within $N$ steps. Our new theorem applies to the expected cost
 of the first $N$ steps in a process of unbounded length. When the process has a bounded time horizon of no more
 than $N$ steps, and if we let $\tau,\alpha^+,\alpha^-$ be the strategy and the adversaries that achieve the
 optima in (\ref{eq:Emin}), respectively (\ref{eq:Emax}), then (\ref{eq:ENeq}) becomes
 \begin{equation}
   \label{eq:convergenceNbounded}
   | \expected^{\max}_{\mdp_{\partition}}-\expected^{\min}_{\mdp_{\partition}} |\leq \epsilon.
 \end{equation}

 We conjecture that this actually also holds true for arbitrary EMDPs:
 
 \begin{conjecture}\label{thm:convergence}
   Let $\mdp$ be 
   a continuous Euclidean MDP. Let 
   $\partition_0\sqsubseteq\partition_1\sqsubseteq
   \cdots\sqsubseteq  \partition_i\sqsubseteq\cdots$   be  a 
   sequence of partitions consistent  with $\goal$ such that
   $\underset{i\rightarrow\infty}{\lim}\delta(\partition_i)=0$.  Then for
   all $s\in\states$:
  \[
     \underset{i\rightarrow\infty}{\lim}
     \expected_{\mdp_{\partition_i}}^{\min}(\cost^*_{\partition_i},[s]_{\partition_i})
       =
       \expected_{\mdp}(\cost,s)
       =
       \underset{i\rightarrow\infty}{\lim}
     \expected_{\mdp_{\partition_i}}^{\max}(\cost^*_{\partition_i},[s]_{\partition_i}).
   \]
 
   \end{conjecture}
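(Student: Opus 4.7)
The plan is to combine the monotonicity of the two bounding sequences with the finite-horizon accuracy granted by Theorem~\ref{theo:mainlem} and the value-iteration convergence of Theorems~\ref{theo:policyiter} and~\ref{theo:viimdp}. By the refinement clauses (\ref{eq:AB1})--(\ref{eq:AB2}) of Theorem~\ref{theo:correctbounds}, the sequence $\expected^{\min}_{\mdp_{\partition_i}}(\cost^*_{\partition_i},[s]_{\partition_i})$ is non-decreasing in $i$ while $\expected^{\max}_{\mdp_{\partition_i}}(\cost^*_{\partition_i},[s]_{\partition_i})$ is non-increasing; by (\ref{eq:bounds}) the former is dominated by $\expected_{\mdp}(\cost,s)$ and the latter bounded below by it. Both limits therefore exist in $[0,\infty]$; denote them $\underline{L}(s)$ and $\overline{L}(s)$, so $\underline{L}(s)\leq \expected_{\mdp}(\cost,s)\leq \overline{L}(s)$. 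It suffices to show $\overline{L}(s)=\underline{L}(s)$, since the sandwich then forces equality with $\expected_{\mdp}(\cost,s)$.

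Assume first that $\expected_{\mdp}(\cost,s)<\infty$ and fix $\epsilon>0$. By Theorem~\ref{theo:policyiter} the EMDP value iterates $L^N(s)$ increase to $\expected_{\mdp}(\cost,s)$, so choose $N$ with $\expected_{\mdp}(\cost,s)-L^N(s)\leq \epsilon/3$. Applying Theorem~\ref{theo:mainlem} to $N$ with tolerance $\epsilon/3$ yields $\delta>0$ such that for every partition $\partition$ with $\delta(\partition)\leq \delta$ and every strategy $\tau$ on $\partition$, $|\ENtau{N}{+}-\ENtau{N}{-}|\leq \epsilon/3$ and $\dtv(\PNtau{N}{+},\PNtau{N}{-})\leq \epsilon/3$. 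Pick $i_0$ with $\delta(\partition_{i_0})\leq \delta$; the estimates then hold for all $i\geq i_0$. Using Theorem~\ref{theo:viimdp} to decompose each infinite-horizon IMDP expected cost into its $N$-step head and a post-$N$ tail, the first estimate controls the head discrepancy between the $\min$- and $\max$-adversaries by $\epsilon/3$, while the second is the key to controlling the tails.

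The main obstacle is precisely this tail control: we must show that for all $i\geq i_0$ the expected cost accumulated after step $N$ in $\mdp_{\partition_i}$ is at most $\epsilon/3$, uniformly in $i$ and under both extreme adversaries. A promising route is to take $N$ large enough that the EMDP tail is already $\leq \epsilon/3$ (which holds by the choice of $N$) and then transport this bound to the IMDPs by combining the total-variation estimate (\ref{eq:PNeq}) with the continuity of $T$ and $\cost$ from Definition~\ref{def:continuous}, bounding the IMDP/EMDP tail discrepancy by a term proportional to $\dtv$. Alternatively, one can iterate Theorem~\ref{theo:mainlem} on a much larger horizon and use the uniform upper envelope $\expected^{\max}_{\mdp_{\partition_0}}$ as a dominator, provided one can show that the $N$-step truncations of $\expected^{\max}_{\mdp_{\partition_i}}$ converge to it uniformly in $i$; establishing this uniform convergence is the technical core of the argument. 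Once the tails are matched, $\overline{L}(s)-\underline{L}(s)\leq \epsilon$ follows, and arbitrariness of $\epsilon$ concludes the finite-cost case. The degenerate case $\expected_{\mdp}(\cost,s)=\infty$ is handled symmetrically: $L^N(s)\to\infty$ via Theorem~\ref{theo:policyiter}, and Theorem~\ref{theo:mainlem} transfers the $N$-step divergence to $\underline{L}(s)$ for every $N$, forcing $\underline{L}(s)=\infty=\overline{L}(s)$.
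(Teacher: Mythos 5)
This statement is not proved in the paper at all: it is stated (and deliberately labelled) as a \emph{conjecture}, and the authors only establish the bounded-horizon special case, namely Theorem~\ref{theo:mainlem} and the consequence~(\ref{eq:convergenceNbounded}) for processes that terminate within $N$ steps. So there is no paper proof to compare against, and the question is whether your argument actually closes the open problem. It does not. The sandwich-and-monotonicity part is fine: by~(\ref{eq:AB1})--(\ref{eq:AB2}) the lower bounds increase and the upper bounds decrease along the refinement chain, and by~(\ref{eq:bounds}) both limits bracket $\expected_{\mdp}(\cost,s)$, so everything reduces to showing that the gap between $\expected^{\max}_{\mdp_{\partition_i}}$ and $\expected^{\min}_{\mdp_{\partition_i}}$ vanishes. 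But the step you yourself flag as ``the technical core'' --- uniform control of the cost accumulated after step $N$ in the approximating IMDPs --- is exactly the open point, and neither of your two suggested routes resolves it. The total-variation estimate~(\ref{eq:PNeq}) compares the $N$-step distributions of the two \emph{adversaries} inside one IMDP; it says nothing about the magnitude of costs incurred after step $N$, and nothing in Definition~\ref{def:continuous} or Theorems~\ref{theo:correctbounds},~\ref{theo:mainlem},~\ref{theo:viimdp} prevents the max-adversary tail from being large, or even infinite, uniformly in $i$. Concretely, even if the EMDP reaches $\goal$ quickly under every strategy, the pessimistic adversary of $\mdp_{\partition_i}$ can repeatedly choose transition distributions (realized by different representative points $s\in\nu$ at different steps) that keep probability mass outside $\goal$, so $\expected^{\max}_{\mdp_{\partition_i}}$ could stay bounded away from $\expected_\mdp(\cost,s)$ --- or be $+\infty$ for every $i$ --- unless one proves an additional uniform termination/contraction property of the abstractions, which is precisely what the conjecture would require. ``Transporting'' the EMDP tail bound to the IMDP via continuity presupposes a coupling between EMDP runs and adversary-controlled IMDP runs that you have not constructed.

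A secondary gap: in both the finite and infinite cases you need to relate the $N$-step quantities of the IMDP to the EMDP iterate $L^N(s)$ of Theorem~\ref{theo:policyiter}, but Theorem~\ref{theo:mainlem} only compares two adversaries under the \emph{same} strategy $\tau$ on the \emph{same} partition; it does not by itself say that the $N$-step IMDP head is close to the $N$-step EMDP cost. For that you would need a finite-horizon analogue of Theorem~\ref{theo:correctbounds} (provable by the same induction as in its proof, but not stated anywhere), so the assertion that ``Theorem~\ref{theo:mainlem} transfers the $N$-step divergence'' to the lower limit in the $\expected_\mdp(\cost,s)=\infty$ case also does not follow as written. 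In short: your outline reproduces what the paper already knows how to do (monotone bounds plus finite-horizon accuracy) and correctly isolates the missing tail-control lemma, but it supplies no proof of that lemma, so the conjecture remains open under your argument.
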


   The approximation guarantees given by Theorems ~\ref{theo:correctbounds} and
   \ref{theo:mainlem} have two important implications: first, they guarantee the correctness
   and asymptotic accuracy of upper/lower bounds computed by value iteration in IMDP
   abstractions of the underlying EMDP. Second, they show that the hypothesis space
   of strategies defined over finite partitions that underlies the reinforcement learning
   approach of~\cite{jaeger2019teaching} is adequate in the sense that it contains strategy
   representations that approximate the optimal strategy for the underlying continuous
   domain arbitrarily well.

\section{Examples and Experiments}

We now use our semi-random walker example to illustrate the theory presented in
the preceding sections, and to demonstrate
its applicability to the validation of machine learning models.

\subsection{IMDP Value Iteration}

We first illustrate experimentally the bounds and convergence properties expressed by
Theorems~\ref{theo:correctbounds} and~\ref{theo:mainlem}.
For this we consider a nested sequence of partitions of the continuous state space
$\states=[0,x_{max}]\times [0,t_{max}]$ consisting of regular grid partitions $\partition=\partition(\Delta)$
defined by a width parameter $\Delta$ for the regions. We run value iteration to compute
$\expected_{\mdp_\partition(\Delta)}^{\min}$ and $\expected_{\mdp_\partition(\Delta)}^{\max}$ for
the values $\Delta \in \{ 0.1, 0.05, 0.025 \}$.
For illustration purposes, we plot expected cost functions along 
one-dimensional sections  $\states'_t = [0,x_{max}] \times \{t\}$ for the two
fixed time points $t=0$ and $t= 0.7$.

\begin{figure}
  \centering
   \includegraphics[scale=0.47]{./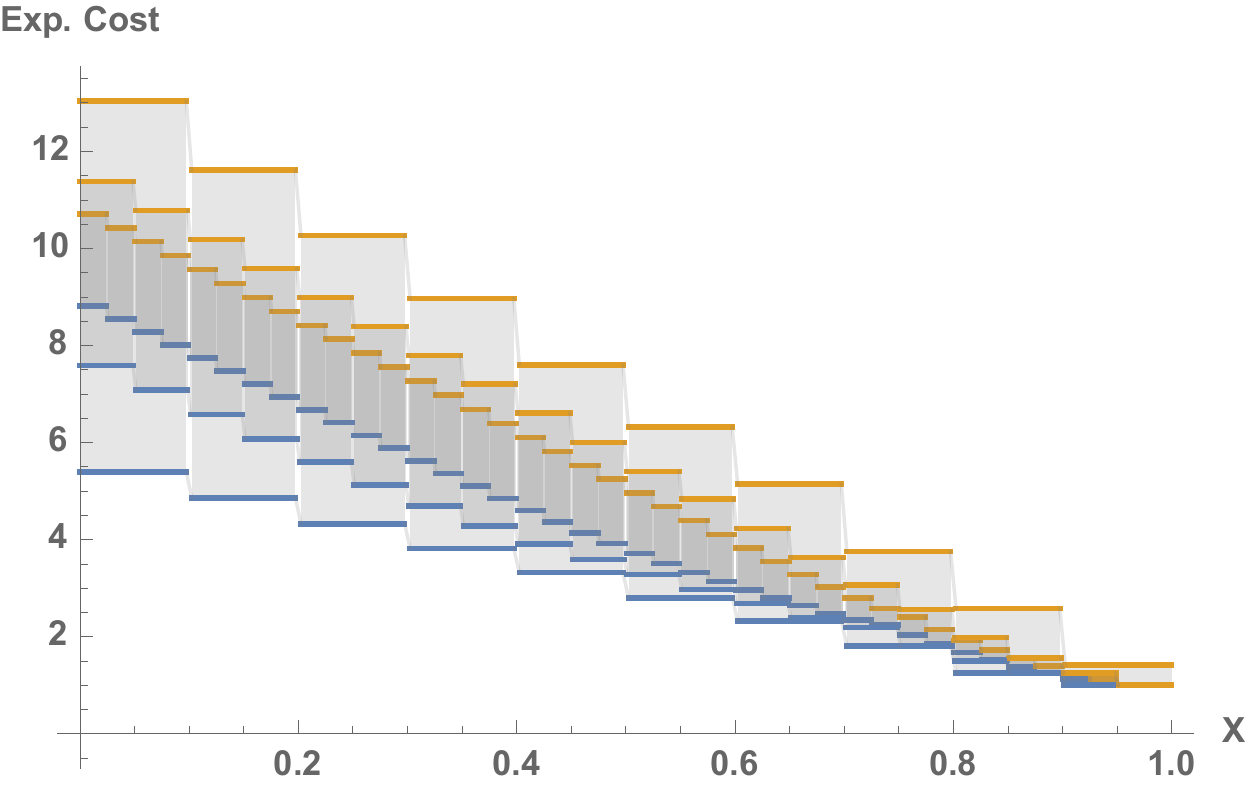}
   \includegraphics[scale=0.47]{./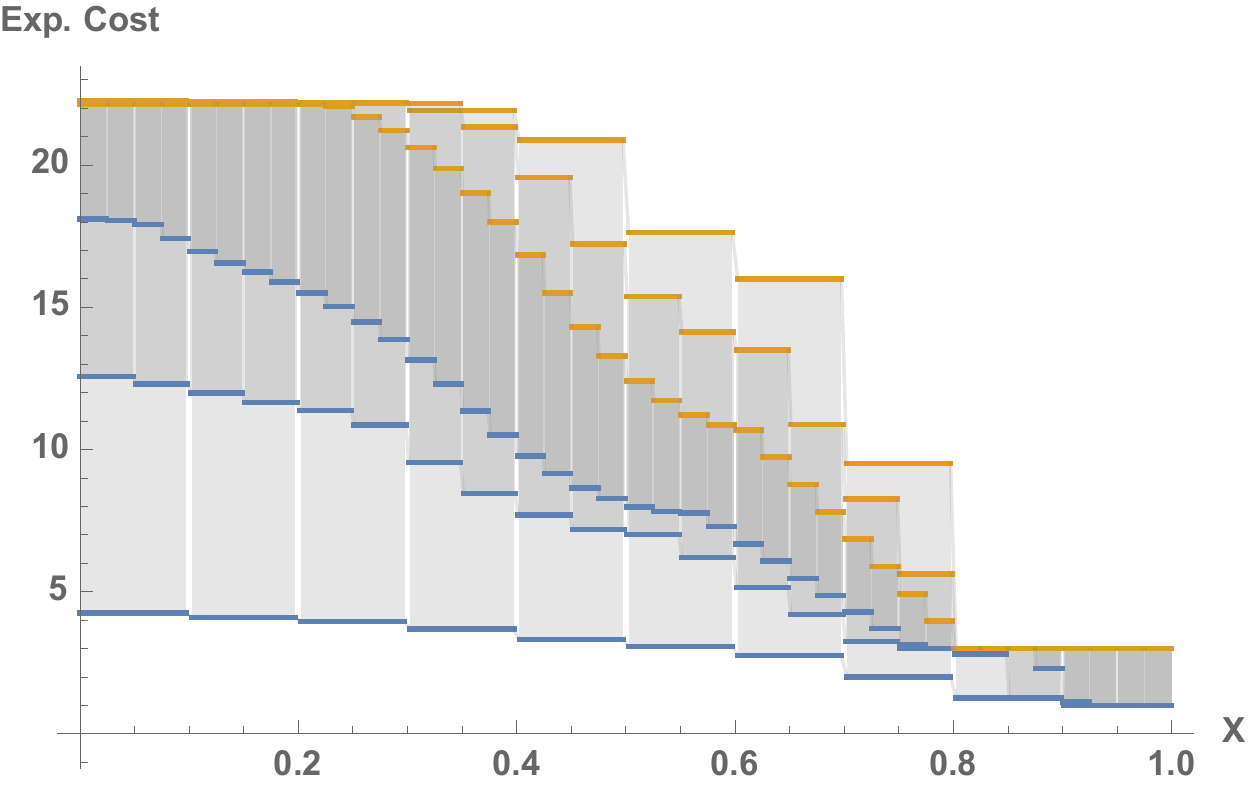}
  \caption{Upper (yellow) and lower (blue) expected cost functions of 
    IMDPs $\mdp_\partition(\Delta)$ for $\Delta \in \{ 0.1, 0.05, 0.025 \}$ on $\states'_{0}$ (left) and
    $\states'_{0.7}$ (right).}
  \label{fig:upperlower}
\end{figure}

Figure~\ref{fig:upperlower} shows the upper and lower expected costs that we obtain
from the induced IMDPs. One can see how the intervals narrow with successive partition refinements.
The bounds on the section $\states'_{0}$ are closer and converge more uniformly than on 
$\states'_{0.7}$. This shows that in the upper left region of the state space ($x<0.5, t\geq 0.7$)
the adversary has a greater influence on the process than at the lower part of the
state space ($x\sim 0$), and the difference between a cooperative and a non-cooperative
adversary is more pronounced.

\begin{figure}
  \centering
  \includegraphics[scale=0.3]{./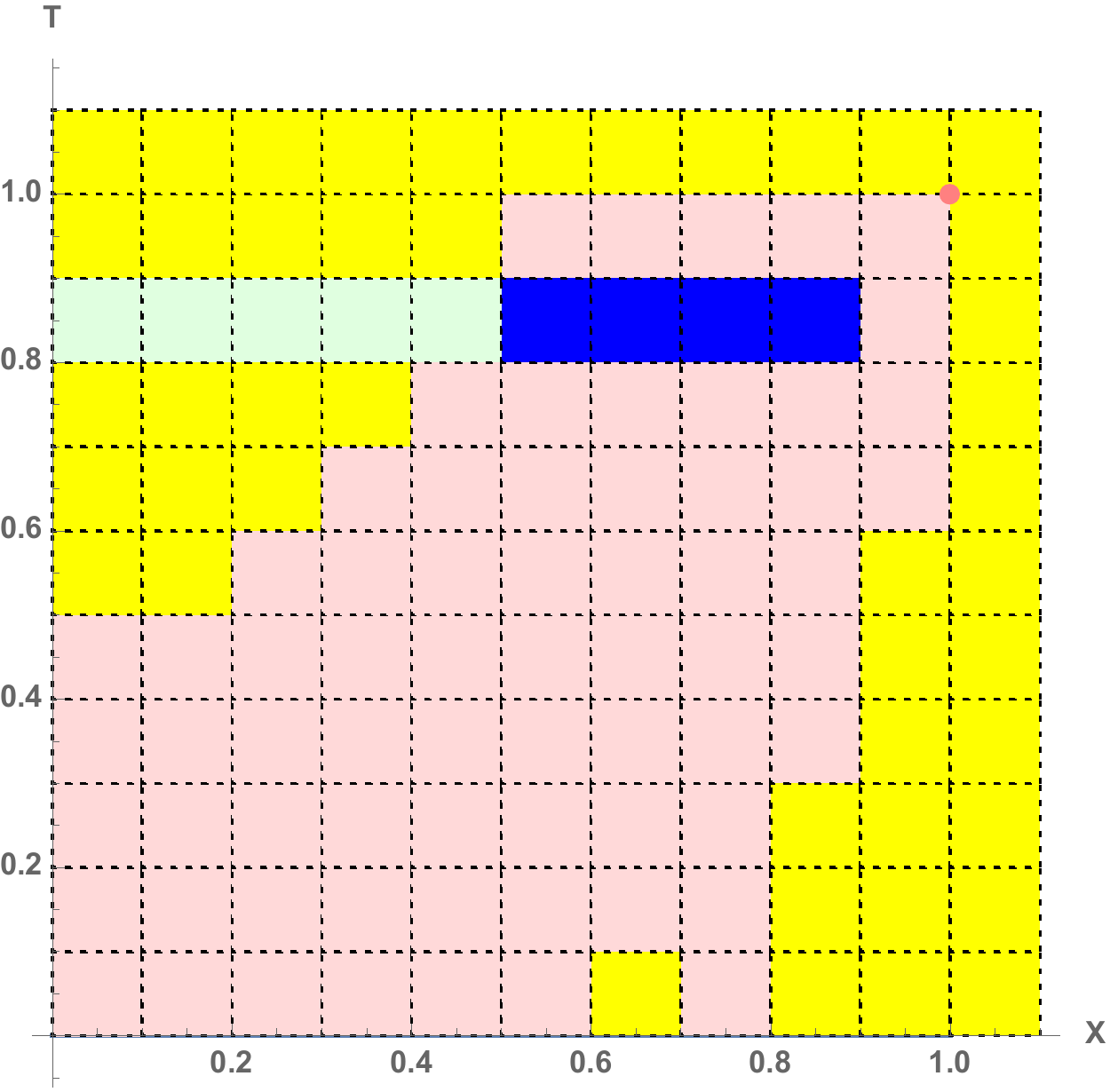}
  \includegraphics[scale=0.3]{./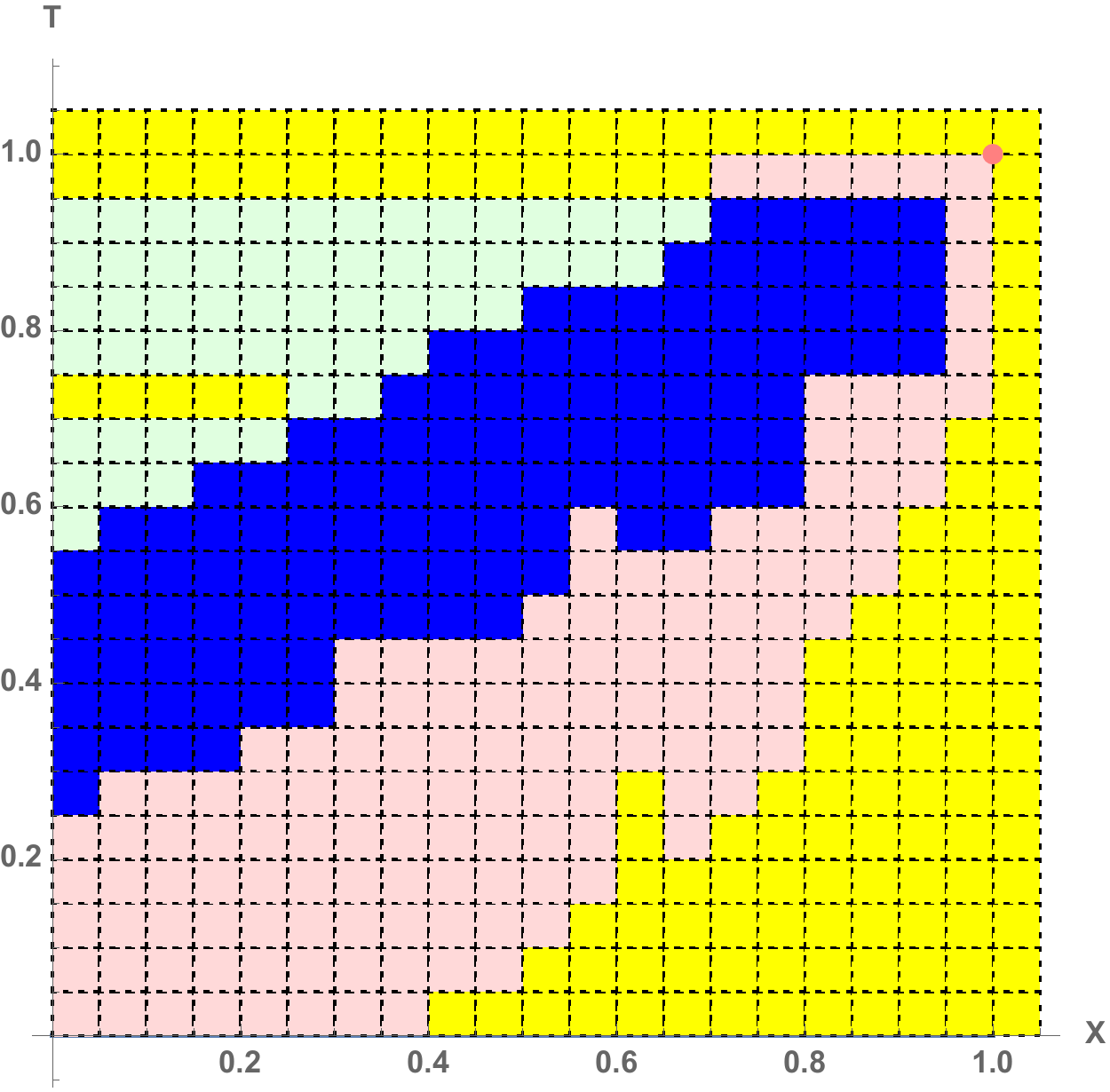}
  \includegraphics[scale=0.3]{./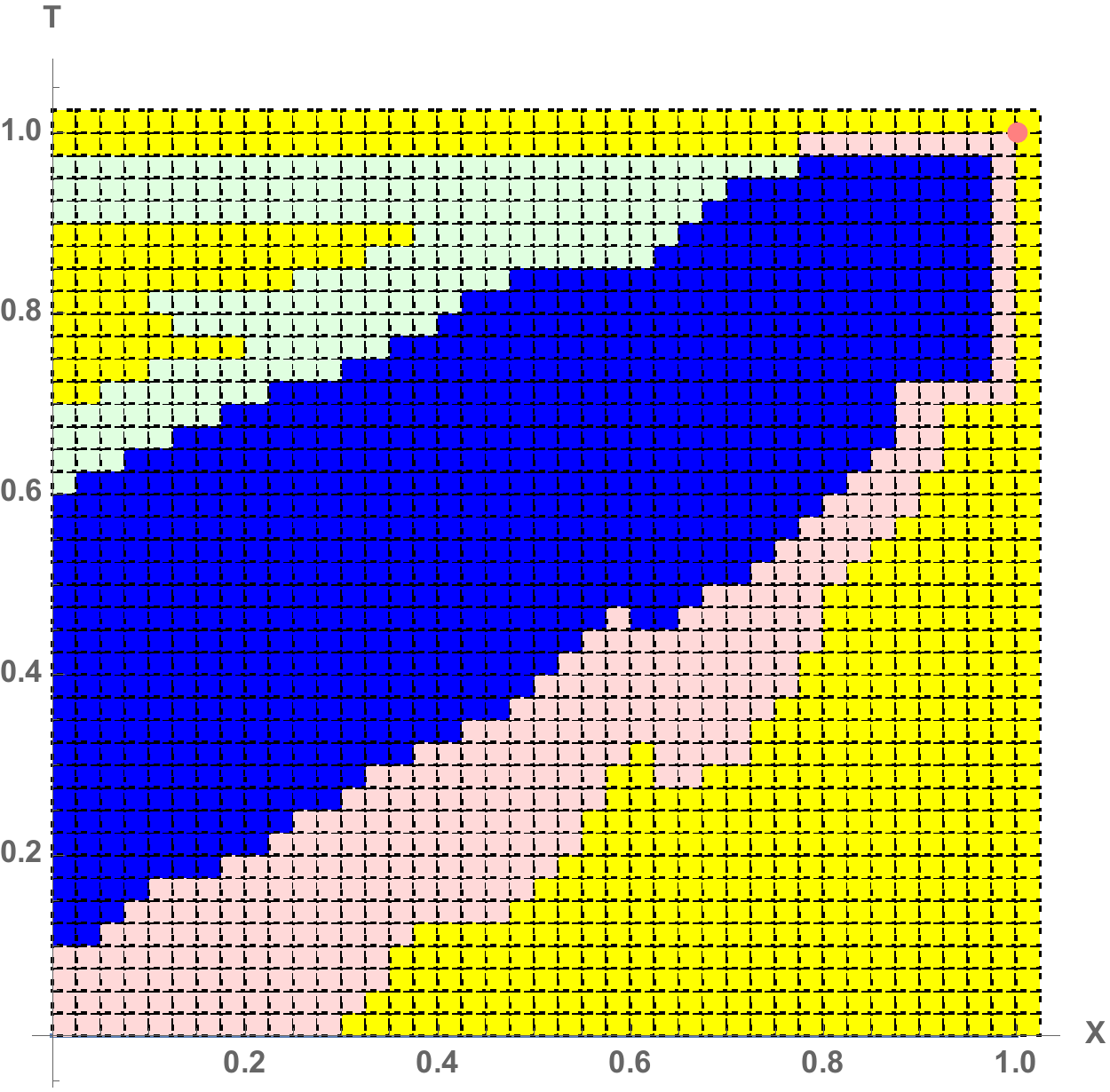} 
  \caption{Strategies obtained from lower and upper expected cost approximations for  $\mdp_\partition(\Delta)$
    for $\Delta = 0.1, 0.05, 0.025$ (left to right). }
  \label{fig:imdppolicy}
\end{figure}

Ultimately, induced strategies are of greater interest than the concrete cost functions.
Once upper and lower expectations define the same strategy, further refinement may
not be necessary.
Figure~\ref{fig:imdppolicy} illustrates  for the whole state space $\states$
the strategies $\sigma$  obtained from the lower (Equation (\ref{eq:Emin}))
and upper (Equation (\ref{eq:Emax})) approximations.
On regions colored blue and yellow, both strategies agree to take the \emph{fast} and \emph{slow}
actions, respectively. The regions colored light green are those where the lower bound
strategy chooses the \emph{fast} action, and the upper bound strategy the \emph{slow} action. Conversely
for the regions colored light red. One can observe how the blue and yellow areas increase in
size with successive partition refinements. However, this growth is not entirely monotonic:
for example, some regions in the upper left
that for $\Delta=0.1$ are yellow are sub-divided in successive refinements
$\Delta=0.05,0.025$ into regions that are partly yellow, partly light green. 

\subsection{Analysis of learned strategies}
%

\begin{figure}[tb]
  \centering
  \includegraphics[scale=0.47]{./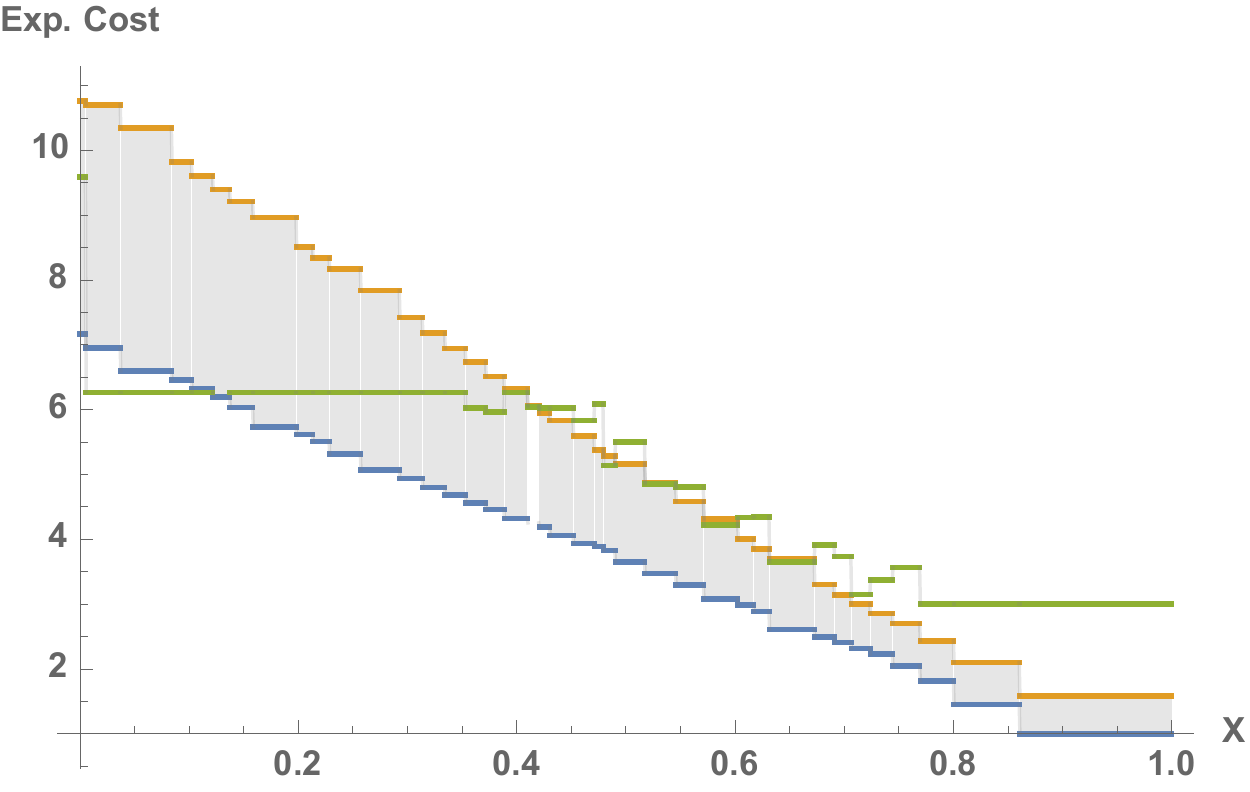}
  \includegraphics[scale=0.47]{./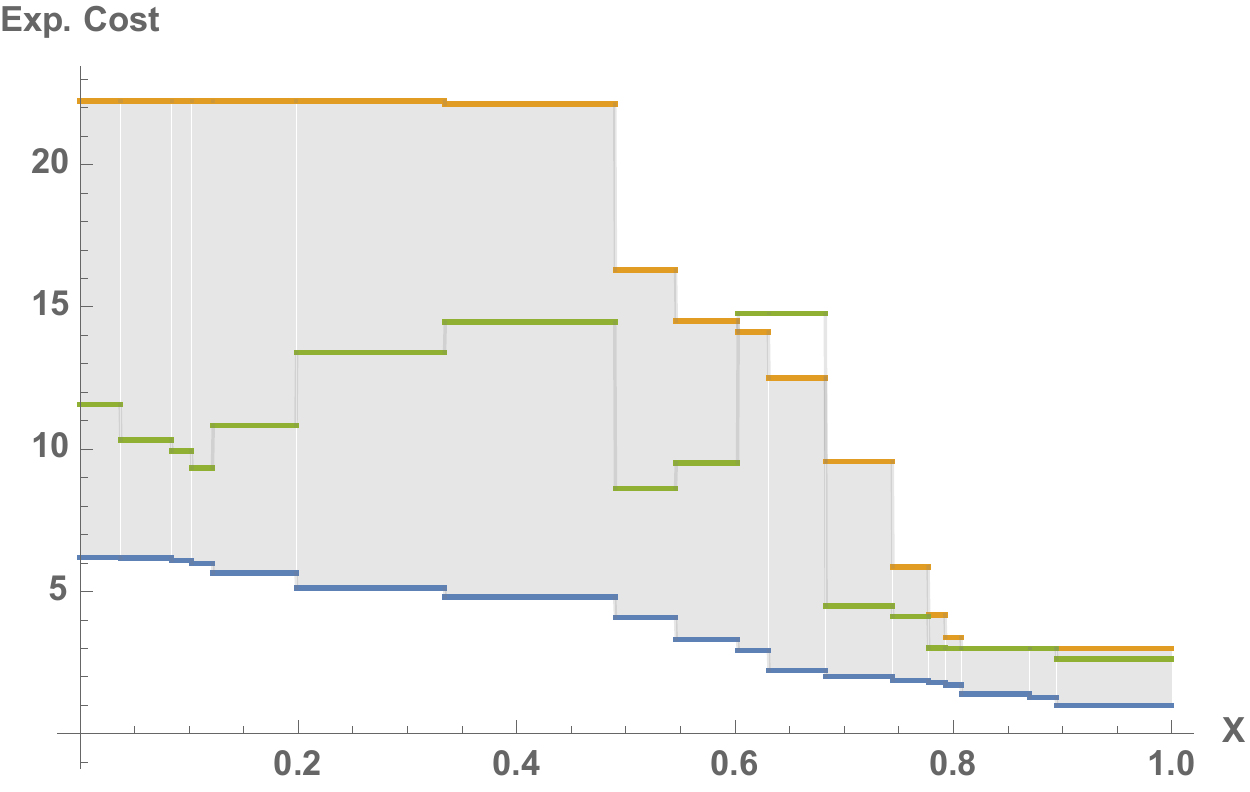}
  \caption{Expected cost functions along $S'_0$ (left) and $S'_{0.7}$ (right). Green: learned cost function;
   yellow/blue: upper/lower expected cost function obtained from IMDP.}\label{fig:upperlowerlearn}
\end{figure}

We now turn to partitions computed by the reinforcement learning method
developed in~\cite{jaeger2019teaching}, and a comparison of the learned cost functions and strategies with
those obtained from the induced IMDPs. 
We have implemented the semi-random walker in UPPAAL STRATEGO and used  reinforcement learning to
learn partitions, cost functions and strategies.
Our learning framework produces a sequence of refinements, based on sampling $100$ additional
runs for each refinement. In the following we consider the models learned after
$k=27$ and $k=205$ refinements. 

Figure~\ref{fig:upperlowerlearn} illustrates expected costs functions for the partition learned at
$k=205$. One can observe a strong correlation between the bounds and the learned costs. Nevertheless,
the learned cost function sometimes lies outside the given bounds. This is to  be expected, since
the random sampling process may produce data that is not sufficiently representative to estimate
costs for some regions. 

\begin{figure}
  \centering
  \includegraphics[scale=0.47]{./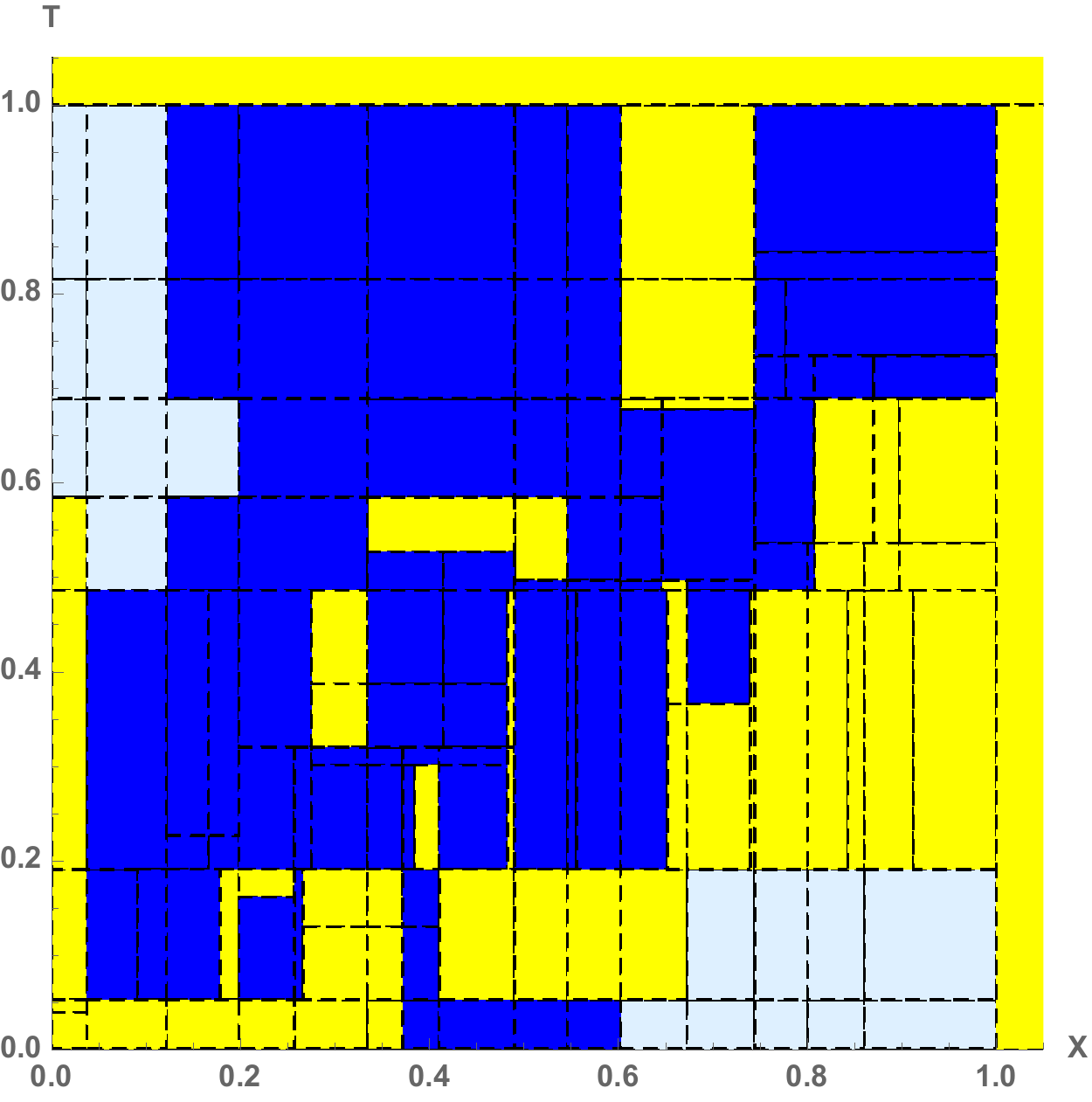}
\includegraphics[scale=0.47]{./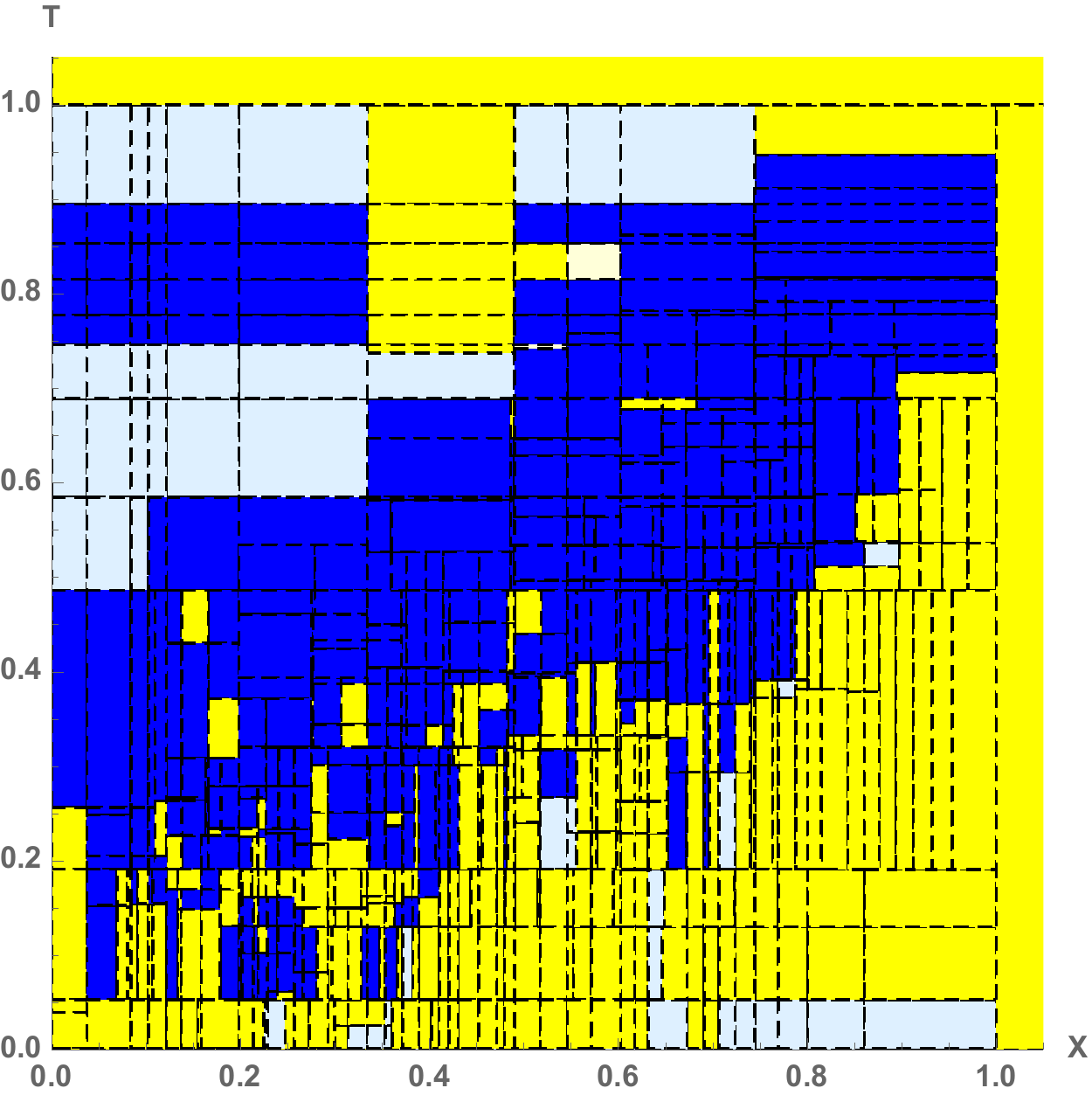}
  \caption{Comparison of the strategies obtained for the IMDP induced by the partition $\partition(27)$ (left) and $\partition(205)$ (right).} \label{fig:learnpolicy}
\end{figure}

Turning again to the strategies obtained on the whole state space, we first note that the
learned strategy at $k=205$, which is shown in Figure~\ref{randomwalkfig} (right)
exhibits an overall similarity with the strategies illustrated in Figure~\ref{fig:imdppolicy}, with
the \emph{fast} action preferred along a diagonal region in the middle of the state space.
To understand the differences between the learning and IMDP results, it is important to
note that in the learning setting $s_0=(0,0)$ is taken to be the initial state of interest, and
all sampling starts there. As a result, regions that are unlikely to be reached (under any choice
of actions) from this initial state will obtain very little relevant data, and therefore
unreliable cost estimates. This is not necessarily a disadvantage, if we want to learn
an optimal control strategy for processes starting at $s_0$. The value iteration process does not
take into account the distinguished nature of $s_0$.

Figure~\ref{fig:learnpolicy} provides a detailed picture of the consistency of the strategies learned at
$k=27$ and $k=205$ with the strategies obtained from value iteration over the same partitions.
Drawn in blue/yellow are those regions where the learned strategy picks the \emph{fast/slow} action, and
at least one of  upper or lower bound strategies  selects the same action. Light blue are those regions
where the learned strategy chooses the \emph{fast} action, but both IMDP strategies select \emph{slow}.
In a single region in the $k=205$ partition (drawn in light yellow) the learned strategy chooses the \emph{slow},
while both IMDP strategies select \emph{fast}. As Figure~\ref{fig:learnpolicy} shows, the areas of greatest
discrepancies (light blue) are those in the top left and bottom right, which are unlikely to be reached
from initial state $(0,0)$.


\section{Conclusion}

In this paper we have developed theoretical foundations for the approximation of Euclidean MDPs by
finite state space imprecise MDPs. We have shown that bounds on the cost function computed on the
basis of the IMDP abstractions are correct, and that for bounded time horizons they converge to the
exact costs when the IMDP abstractions are refined. We conjecture that this convergence also holds
for the total cost of (potentially) infinite runs.

The results we here obtained provide theoretical underpinnings for the learning approach
developed in~\cite{jaeger2019teaching}. Upper and lower bounds computed from induced IMDPs can
be used to check the accuracy of learned value functions. As we have seen, data sparsity and
sampling variance can make the learned cost functions fall outside computed bounds.
One can also use value iteration on IMDP approximations directly as a tool for computing cost functions
and strategies, which then would come with stronger guarantees than what we obtain through learning.
However, compared to the learning approach, this has 
important limitations:  first, we will usually only obtain a partial strategy that is uniquely defined
only where upper and lower bounds lead to the same actions. Second, we will require a full model of the underlying EMDP,
from which IMDP abstractions then can be derived, and the optimization problem over adversaries that is part of the
value iteration process must be tractable. Reinforcement learning, on the other hand, can also be applied to black box
systems, and its computational complexity is essentially independent of the complexities of the
underlying dynamic system.

\bibliographystyle{abbrvnat}
\bibliography{imdp}

\newpage
\appendix

\section{Total Variation Distance}
The following lemma collects some basic facts about
 total variation  distance:

 \begin{lemma}
   \label{lem:dtvbasic}
   Let $\partition$ be a finite set,
   and  $P,P'$ be distributions on $\partition$ with $\dtv(P,P')\leq\epsilon$.
   \begin{description}
   \item[A] Let $f,f'$ functions on $\partition$ with values in $\reals_{\geq  0}$ and
     $|f(\nu)-f'(\nu)|\leq \epsilon$ for all $\nu$. Then
     \begin{equation}
     \label{eq:diffeval}
     |\expected[f]-\expected'[f']| \leq \epsilon\cdot (1+2\max_{\nu\in\partition}f(\nu)),
   \end{equation}
   where $\expected,\expected'$ denote expectation under $P$ and $P'$, respectively.
 \item[B] For each $\nu\in\partition$ let $Q_\nu,Q_\nu'$ be distributions on a space $\states$
   (discrete or continuous), such that $\dtv(Q_\nu,Q_\nu')\leq\epsilon$ for all $\nu$. Then
   \begin{equation}
     \label{eq:diffmixture}
     \dtv( \sum_\nu P(\nu)Q_\nu , \sum_\nu P'(\nu)Q_\nu'   )\leq 3\epsilon.
   \end{equation}
   \end{description}
   
 \end{lemma}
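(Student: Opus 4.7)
The plan is to prove both parts by the triangle inequality, relying on the elementary identity $\sum_\nu |P(\nu) - P'(\nu)| = 2\,\dtv(P,P')$ for distributions on a finite set and on the convexity of $\dtv$ in each argument.

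For Part A, I would insert the intermediate quantity $\sum_\nu P'(\nu) f(\nu)$ and write
\[
|\expected[f] - \expected'[f']|
\leq
\Bigl|\sum_\nu (P(\nu)-P'(\nu))\, f(\nu)\Bigr|
+ \Bigl|\sum_\nu P'(\nu)(f(\nu)-f'(\nu))\Bigr|.
\]
The first summand is bounded by $\max_\nu f(\nu)\cdot \sum_\nu |P(\nu)-P'(\nu)| \leq 2\epsilon \max_\nu f(\nu)$, and the second by $\epsilon$ since $P'$ is a probability distribution and $|f-f'|\leq \epsilon$ pointwise. Adding these gives exactly $\epsilon(1+2\max_\nu f(\nu))$.

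For Part B, I would insert the analogous intermediate mixture $\sum_\nu P(\nu) Q_\nu'$ and apply the triangle inequality:
\[
\dtv\Bigl(\sum_\nu P(\nu) Q_\nu,\ \sum_\nu P'(\nu) Q_\nu'\Bigr)
\leq
\dtv\Bigl(\sum_\nu P(\nu) Q_\nu,\ \sum_\nu P(\nu) Q_\nu'\Bigr)
+ \dtv\Bigl(\sum_\nu P(\nu) Q_\nu',\ \sum_\nu P'(\nu) Q_\nu'\Bigr).
\]
The first term, being a $\dtv$-distance between two mixtures with identical weights $P(\nu)$, is at most $\sum_\nu P(\nu)\dtv(Q_\nu,Q_\nu') \leq \epsilon$ by convexity of $\dtv$. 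The second is a $\dtv$-distance between two mixtures of the same distributions $Q_\nu'$ with different weights; rewriting it as $\sup_A |\sum_\nu (P(\nu)-P'(\nu)) Q_\nu'(A)|$ and pulling the absolute value inside the sum bounds it by $\sum_\nu |P(\nu)-P'(\nu)|\cdot \sup_A Q_\nu'(A) \leq 2\,\dtv(P,P') \leq 2\epsilon$. Adding the two contributions yields $3\epsilon$.

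I do not anticipate a genuine obstacle. The only subtlety to watch is that the decomposition in Part A must place $(P-P')$ next to $f$ rather than $f'$; the reverse ordering would leave $\max f'$ in the final estimate and require an additional $\epsilon^2$ correction to reproduce the stated bound. In Part B the stated factor $3$ is slightly loose (a sharper handling of $\sup_A Q_\nu'(A)$ yields $2$), but the weaker bound suffices and keeps the argument clean.
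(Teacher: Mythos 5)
Your proof is correct and takes essentially the same route as the paper's: in part A you use the same intermediate term $\sum_\nu P'(\nu)f(\nu)$ with the bound $2\epsilon\max_\nu f(\nu)+\epsilon$, and in part B the same intermediate mixture $\sum_\nu P(\nu)Q'_\nu$ with the bounds $\epsilon$ and $2\epsilon$ for the two triangle-inequality terms. Your side remark that the constant in B can be tightened to $2\epsilon$ is accurate but, as you say, not needed for the stated lemma.
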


 \begin{proof}
   For {\bf A} we write
   \begin{displaymath}
     |\expected[f]-\expected'[f']| \leq 
     |\expected[f]-\expected'[f]| + |\expected'[f]-\expected'[f']| 
   \end{displaymath}
   With
   \begin{multline*}
     |\expected[f]-\expected'[f]|=|\sum_\nu P(\nu)f(\nu) -  \sum_\nu P'(\nu)f(\nu)| \leq \\
     \sum_\nu  f(\nu)| P(\nu)-  P'(\nu)|  \leq \max_{\nu\in\partition}f(\nu) 2 \dtv(P,P')
   \end{multline*}
   and
   \begin{displaymath}
      |\expected'[f]-\expected'[f']|\leq\epsilon
    \end{displaymath}
    then (\ref{eq:diffeval}) follows.
    The proof for {\bf B} is very similar:
    \begin{multline*}
      \dtv\left( \sum_\nu P(\nu)Q_\nu , \sum_\nu P'(\nu)Q_\nu'   \right)\leq \\
      \dtv\left( \sum_\nu P(\nu)Q_\nu , \sum_\nu P(\nu)Q_\nu'    \right) +  \dtv\left( \sum_\nu P(\nu)Q_\nu' , \sum_\nu P'(\nu)Q_\nu'    \right).
    \end{multline*}
    Using the definition of total variation as $\dtv(P,P')=sup_{S\subseteq\states}|P(S)-P'(S)|$ the first term
    on the right can be bounded by $\epsilon$, and the second by $2\epsilon$. 
 \end{proof}

 \section{Proofs}

 \measurable*
  \begin{proof}
    For each $i$, $\pi \mapsto \cost(\pi_i)$ is  $(\borel^K\otimes 2^{\act})^{\infty} - {\borel}_+$
    measurable according to the measurability condition on $\cost$. It follows that
    also $\cost^{(N)}(\pi):= \sum_{i=1}^N \cost(\pi_i)$ is measurable for every $N$.
    Since $\costinfty$ is the supremum of the $\cost^{(N)}$, it
    is measurable~\cite[Proposition 2.1.4]{Cohn1980}.
  \end{proof}

  \strategykernel*
  \begin{proof}
  For fixed $(s,a)$, $T_\sigma$ is a probability measure on $\borel^K\times 2^{\act}$ by
  construction. To show that for fixed $(B,A)$ the function $(s,a)\mapsto T_\sigma((s,a),(B,A))$
  is measurable, we only need to consider the case of singletons $A=\{a'\}$. By the measurability
  of $\sigma(\cdot)(a')$ we can express $\sigma(\cdot)(a')$ as the supremum of a monotone increasing
  sequence of simple
  measurable functions $\sigma^{(k)}(\cdot)(a')$\footnote{Recall that a simple function is a finite
    weighted sum of indicator functions of measurable sets}\cite[Theorem 13.5]{Billingsley1986}.
  For each  $\sigma^{(k)}$ the integral
  $\int_B \sigma^{(k)}(s')(\{a'\}) T(s,a,ds')$ then decomposes into a weighted sum of
  integrals of the form $\int_{B\cap C_i} T(s,a,ds') = T(s,a,B\cap C_i)$, which are measurable in $s$
  according to Definition~\ref{def:mdp}. Finally,
  \begin{displaymath}
    \int_B \sup_k\sigma^{(k)}(s')(\{a'\}) T(s,a,ds')  =
    \sup_k  \int_B\sigma^{(k)}(s')(\{a'\}) T(s,a,ds')
  \end{displaymath}
  by the monotone convergence theorem~\cite[Theorem 16.2]{Billingsley1986}, and measurability
  follows from the measurability of the supremum of measurable
  functions~\cite[Theorem 13.4]{Billingsley1986}.
\end{proof}

\LEmeasurable*
\begin{proof}
By \cite[Theorem 13.5]{Billingsley1986} we can express $E$ as the supremum of a 
monotone increasing sequence of simple measurable functions $E^{(k)}$. 
For each $E^{(k)}$ the integral $\int_{t \in \states} E^{(k)}(t) \,  T(s,a, \dee t)$ then 
decomposes into a weighted sum of integrals of the form 
$\int_{t \in C_i} T(s,a, \dee t) = T(s,a, C_i)$, for some measurable set $C_i \subseteq \states$, 
which are measurable according to Definition~\ref{def:mdp}. 
Since $E = \sup_k E^{(k)}$, by the monotone convergence theorem~\cite[Theorem 16.2]{Billingsley1986}, 
\begin{align*}
    \viop E(s) &=  \min_{a\in\act} \left( \cost(s,a) +
  \sup_k \int_{t \in \states}  E^{(k)}(t) \,  T(s,a, \dee t)  \right) \,,
  \\
  \viop^\sigma E(s) &= \sum_{a \in \act} \sigma(s)(a) \cdot \left( \cost(s,a) + 
  \sup_k \int_{t \in \states} E^{(k)}(t)  \,  T(s,a, \dee t)  \right) \,,
 \end{align*}
From the above, measurability of $\viop E$ follows from the measurability of 
$\cost(\cdot,a)$, for all $a\in\act$, and of minima of measurable functions~\cite[Theorem 13.4]{Billingsley1986}.
Measurability of $\viop^\sigma E$ follows similarly by additionally noticing that for any strategy 
$\sigma$, the $[0,1]$-valued function $\sigma(\cdot)(a)$ is measurable, for all $a \in \act$.
\end{proof}

\deteministicstrategies*
\begin{proof}
$\inf_\sigma \viop^\sigma \leq \viop$ follows by noticing that $\viop = \inf_d \viop^d$, where $d$ ranges only over
deterministic strategies.
To establish the reverse equality, notice that, for all $\sigma$ and $s \in \states$, 
$\sum_{a\in\act} \sigma(s)(a) = 1$.
\begin{align*}
\viop E(s) 
&= \sum_{a\in\act} \sigma(s)(a) \cdot \viop E(s) \\
&\leq  \sum_{a \in \act} \sigma(s)(a) \cdot \left( \cost(s,a) + \int_{t \in \states} E(t) \,  T(s,a, \dee t)  \right)
= \viop^\sigma(E)(s) \,.
\end{align*}
Thus, $\viop \leq \viop^\sigma$, for all strategies $\sigma$. From this we obtain 
$\inf_\sigma \viop^\sigma \geq \viop$.
\qed
\end{proof}

\volterraStrategy*
\begin{proof}
We have to show that the following holds for all states $s\in\states$:
\begin{equation}
  \label{eq:volterra}
  \expected_{\sigma}(\cost, s) = 
  \sum_{a \in \act} \sigma(s)(a) \cdot \left( \cost(s,a) + \int_{t \in \states} \expected_{\sigma}(\cost, t) \,  T(s,a, \dee t)  \right) \,.
\end{equation}
By monotone convergence theorem and linearity of the integral, we have
\begin{align}
\expected_{\sigma}(\cost, s) 
&= \sup_N \int_{\pi \in \runs} \sum_{i=1}^N \cost(\pi_i) \, \pruns{s,\sigma}(\dee \pi) \notag \\
&= \int_{\pi \in \runs} \cost(\pi_1) \pruns{s,\sigma}(\dee \pi) + \sup_N \int_{\pi \in \runs} \sum_{i=2}^N \cost(\pi_i) \, \pruns{s,\sigma}(\dee \pi) \notag \\
&= \int_{\pi \in \runs} \cost(\pi_1) \pruns{s,\sigma}(\dee \pi) + 
\int_{\pi \in \runs} \costinfty(\run_{>1}) \, \pruns{s,\sigma}(\dee \run) \,.
\label{eq:volterra-split}
\end{align}

By definition, the first expectation in \eqref{eq:volterra-split} is just 
\begin{align*}
\int_{\pi \in \runs} \cost(\pi_1) \, \pruns{s,\sigma}(\dee \pi) 
&= \sum_{a\in\act} \cost(s,a) \cdot \sigma(s)(a) \,.
\end{align*}
and by a change of variable in the integral, the second expectation in \eqref{eq:volterra-split} is
\begin{align*}
\int_{\pi \in \runs} \costinfty(\run_{>1}) \, \pruns{s,\sigma}(\dee \run)  
&= \sum_{a\in\act} \sigma(s)(a) \cdot \int_{t \in \states}
\left( \int_{\run \in \runs} \costinfty(\run) \pruns{t, \sigma}(\dee \run) \right) \, T(s,a, \dee t) \\
&= \sum_{a\in\act} \sigma(s)(a) \cdot 
 \int_{t \in \states} \expected_\sigma(\cost,t) \, T(s,a, \dee t) \,.
\end{align*}
Thus, \eqref{eq:volterra} follows.
\qed
\end{proof}

\leastprefixpoint*
\begin{proof}
By Lemma~\ref{lem:deteministicstrategies}, Proposition~\ref{prop:volterraStrategy} and monotonicity 
of $\viop^\sigma$ we have 
\begin{equation*}
\expected(\cost, \cdot) =
\inf_\sigma \expected_\sigma(\cost, \cdot) = 
\inf_\sigma \viop^\sigma \expected_\sigma(\cost, \cdot) \geq
\inf_\sigma \viop^\sigma \expected(\cost, \cdot) = 
\viop \expected(\cost, \cdot) \,.
\end{equation*}

Next we prove that if $E \geq \viop E$, then $E \geq \expected(\cost, \cdot)$.
By induction on $n \geq 1$, we prove that, for all $s \in \states$ and strategies $\sigma$
\begin{equation}
(\viop^\sigma)^n E(s) \geq 
\int_{\run \in \runs} \sum_{i = 1}^n \cost(\run_i) \,  \pruns{s,\sigma}(\dee \pi) \,.
\label{eq:iter}
\end{equation}
The base case $n = 1$ follows by definition of $\pruns{s,\sigma}(\dee \pi)$ and because $E$ is positive:
\begin{align*}
\int_{\pi \in \runs} \cost(\pi_1) \, \pruns{s,\sigma}(\dee \pi) 
&= \sum_{a\in\act} \cost(s,a) \cdot \sigma(s)(a) \\
&\leq \sum_{a \in \act} \sigma(s)(a) \cdot \left( \cost(s,a) + \int_{t \in \states} E(t) \,  T(s,a, \dee t)  \right) \,.
\end{align*}
As for the inductive step, assume \eqref{eq:iter} holds for $n \geq 1$. Then
\begin{align*}
&\int_{\run \in \runs} \sum_{i = 1}^{n+1} \cost(\run_i) \,  \pruns{s,\sigma}(\dee \pi) \\
&= \int_{\run \in \runs} \cost(\run_1) \,  \pruns{s,\sigma}(\dee \pi) +
      \int_{\run \in \runs} \sum_{i = 2}^{n+1} \cost(\run_i) \,  \pruns{s,\sigma}(\dee \pi) \\
&= \sum_{a \in \act} \sigma(s)(a) \cdot \left(  
      \cost(s,a) + \int_{t \in \states} 
          \left( \int_{\run \in \runs} \sum_{i = 1}^{n} \cost(\run_i) \,  \pruns{t,\sigma}(\dee \pi) \right)
          \, T(s,a, \dee t)
      \right) \\
&\leq \sum_{a \in \act} \sigma(s)(a) \cdot \left(  
      \cost(s,a) + \int_{t \in \states} (\viop^\sigma)^n E(t) \, T(s,a, \dee t) \right) \\
&= (\viop^\sigma)^{n+1} E(s) \,.
\end{align*}

Let $d$ be the deterministic strategy such that 
$\viop E= \viop^d E$. By hypothesis, $E \geq \viop E$, and by monotonicity of $\viop^d$, we obtain 
$E \geq (\viop^d)^n E$, for all $n \geq 1$.
Thus, by \eqref{eq:iter} and monotone convergence theorem, for all $s \in \states$
\begin{align*}
E(s) 
\geq \sup_{n \geq 1} (\viop^d)^n E(s) 
\geq  \sup_{n \geq 1} \int_{\run \in \runs} \sum_{i = 1}^n \cost(\run_i) \,  \pruns{s,d}(\dee \pi)
= \expected_d(\cost, s) \,.
\end{align*}
Since $\expected(\cost, s) = \inf_{\sigma} \expected_\sigma(\cost, s)$, from the above
we have $E(s) \geq \expected(\cost, s)$.
\qed
\end{proof}

\policyiter*
 \begin{proof}
   The chain $ \bot \leq L^1  \leq L^2 \leq \dots$ is monotonically increasing.
   This is immediate from $\bot \leq \viop \bot$ and monotonicity of the operator $\viop$.
   
   Next we show that $L$ is a fixed point of the $\viop$ operator.
   Clearly, $\bot \leq \viop L$, and by monotonicity of $\viop$, for all $n \geq 1$, $L^n \leq \viop L$.
   Hence $L \leq \viop L$. Now we establish $\viop L \leq L$. If $L(s) = \infty$, the inequality holds trivially
   on $s$. Assume $L(s) < \infty$. Then there exist a sequence $(a_n)_{n \geq 0} \in \act$ such that
   \begin{align*}
   L(s) 
   &= \sup_{n \geq 0} \min_{a\in\act} \left( \cost(s,a) +
   \int_{t \in \states} L^n(t) \,  T(s,a, \dee t)  \right) \\
   &= \sup_{n \geq 0} 
   \left( \cost(s,a_n) + \int_{t \in \states} L^n(t) \,  T(s,a_n, \dee t) \right) \,.
   \end{align*}
   
   Let $\states^\infty = \{ t \in \states \mid L(t) = \infty \}$. In the following we show that 
   \begin{equation}
     \exists N \geq 0 \text{ such that, } \forall n \geq N .\, T(s, a_n, \states^\infty) = 0 \,.
     \label{eq:goodactionselection}
   \end{equation}
%
   
   If $S^\infty = \emptyset$, \eqref{eq:goodactionselection} holds trivially. Let 
   $S^\infty \neq \emptyset$. Assume by contradiction that for all $N \geq 0$ there exists $n \geq N$
   such that $T(s, a_n, \states^\infty) > 0$. This is equivalent to the existence of a subsequence 
   $(a_k)$ such that for all $a_k$, $T(s, a_k, \states^\infty) > 0$.
   For $b \in \reals$ and $n \geq 0$, 
   denote by $E_b^n$ the set $\{ t \in \states \mid L^n (t) \geq b \}$. Then,  
   \begin{equation*}
   \states^\infty 
   = \{ t \in \states \mid \forall b \in \reals.\, \exists n \geq 0. \, L^n (t) \geq b \} 
   = \bigcap_{b \in \reals} \bigcup_{n \geq 0} E_b^n \,.
   \end{equation*}
   Moreover, for all $b,b' \in \reals$ and $n \geq 0$, if $b \geq b'$ then $E_{b}^n \subseteq E_{b'}^n$
   and by monotonicity of the operator $\viop$, $E_b^{n} \subseteq E_b^{n+1}$. Thus, by \cite[Theorem~10.2]{Billingsley1986}, for all $a_k$ and $b \in \reals$
   \begin{gather}
     T(s, a_k, \states^\infty) = \inf_{b \in \reals} T(s, a_k, \bigcup_{n \geq 0} E_b^n) \,, 
     \label{eq:inf}
     \\
     T(s, a_k, \bigcup_{n \geq 0} E_b^n) = \sup_{n \geq 0} T(s, a_k, E_b^n) \,.
     \label{eq:sup}
   \end{gather}
   Since $T(s, a_k, \states^\infty) > 0$, by \eqref{eq:inf}, , for all $a_k$, $T(s, a_k, \bigcup_{n \geq 0} E_b^n) > 0$. Consequently, by \eqref{eq:sup}, for all $b \in \reals$, exist $k'$ such that $T(s, a_{k'}, E_b^{k'}) \geq 0$. Thus, by
   \begin{equation*}
   L(s) \geq \int_{t \in E_b^{k'}} L^{k'}(t) \,  T(s,a_{k'}, \dee t) 
   \geq \int_{t \in E_b^{k'}} b \,  T(s,a_{k'}, \dee t) = b \cdot T(s,a_{k'}, E_b^{k'})
   \end{equation*}
   and the fact that $b$ can assume arbitrarily large values, $L(s) = \infty$.
   This contradicts our initial assumption that $L(s) < \infty$. 
   Therefore \eqref{eq:goodactionselection} must hold.
   
   By \eqref{eq:goodactionselection}, for all
   $n \geq N$, $\int_{t \in \states} L^n(t) \,  T(s,a_n, \dee t) = 
   \int_{t \in \states} (L(t) - L(t)) L^n(t) \,  T(s,a_n, \dee t)$.
   Thus the following hold:
   \begin{align*}
   L(s) 
   &= \sup_{n \geq N} 
   \left( \cost(s,a_n) + \int_{t \in \states} L^n(t) \,  T(s,a_n, \dee t) \right) \\
   &= \sup_{n \geq N} 
   \left( \cost(s,a_n) + \int_{t \in \states} L(t) \,  T(s,a_n, \dee t) \right) + \Delta(s) \\
   &\geq \sup_{n \geq N} \min_{a \in \act}
   \left( \cost(s,a) + \int_{t \in \states} L(t) \,  T(s,a, \dee t) \right) + \Delta(s) \\
   &= \viop L(s) + \Delta(s) \,,
   \end{align*}
   where
   \begin{equation*}
      \Delta(s) = \sup_{n \geq N} \int_{t \in \states} \Big( L^n(t) - L(t) \Big) \,  T(s,a_n, \dee t) \,.
   \end{equation*}
   Hence, if $\Delta(s) = 0$, we get $L(s) \geq \viop L(s)$.
   
   The finiteness of $\act$ ensures the existence of an action $a' \in \act$ repeating 
   infinitely often in $(a_n)_{n \geq N}$. Thus exists a subsequence $(n_k)$ such
   that, for all $n_k$
   \begin{equation*}
   \int_{t \in \states} \Big( L^{n_k}(t) - L(t) \Big) \,  T(s,a_{n_k}, \dee t) 
   = 
   \int_{t \in \states} \Big( L^{n_k}(t) - L(t) \Big) \,  T(s,a', \dee t) \,.
   \end{equation*}
   Therefore
   \begin{equation*}
      \Delta(s) = \sup_{n \in (n_k)} \int_{t \in \states} \Big( L^n(t) - L(t) \Big) \,  T(s,a', \dee t) \,,
   \end{equation*}
   and, by monotone convergence theorem, $\Delta(s) = 0$.
   
   \medskip
   Finally, we show that $L = \expected(\cost, \cdot)$.
   By monotonicity of $\viop$, for all $n \geq 0$, we have $L^n \leq \expected(\cost, \cdot)$.
   Hence $L \leq  \expected(\cost, \cdot)$.
   The reverse inequality $L  \geq \expected(\cost, \cdot)$ follows by Proposition~\ref{prop:leastprefixpoint}
   since $L \geq \viop L$.
   \qed
 \end{proof}

\viimdp*
 \begin{proof}
   
   {\bf Step 1}: The sequence $L^{\minmax,k}$ is monotonically increasing: this is immediate from the
   facts that $L^{\minmax,1}\geq \bot $ because $\cost^{\minmax}\geq 0$, and
   $C\geq C' \Rightarrow  \viop^{\minmax} C \geq \viop^{\minmax} C'$.

   {\bf Step 2}: We show that $ L^{\minmax}$ is a fixed point of the $\viop$ operator.
   Let
   \begin{displaymath}
     \states^{\minmax,\infty}:=\{s\in\states: L^{\minmax}(s)=\infty\}\ \ \mbox{and}\ \
     \states^{\minmax,<\infty}:=\states\setminus\states^{\minmax,\infty}.
   \end{displaymath}

   By monotonicity, for $s\in \states^{\minmax,\infty}$ we have $\viop^{\minmax} L^{\minmax}(s)=\infty$.
   Now let  $s\in \states^{\minmax,<\infty}$. We  define separately for the two
   cases of $\minmax$:
     \begin{displaymath}
     \begin{array}{lll}
       \act^{\min,<\infty}(s) & :=  & \{a\in\act| \exists T\in T^*(s,a)\ \mbox{s.t.}\ T(\states^{\min,\infty})=0\}, \\
       \act^{\max,<\infty}(s) & :=  & \{a\in\act| \forall T\in T^*(s,a):\  T(\states^{\max,\infty})=0\}. 
     \end{array}
   \end{displaymath}
   The set $\act^{\minmax,<\infty}(s)$  is non-empty (the closedness of $T^*$ is again required here), and 
   we can limit the optimization to actions that after the adversary's choice
   do not lead to infinite cost states:
   \begin{multline}
     \viop^{\minmax} L^{\minmax}(s)= \\
     \min_{a\in\act^{\minmax,<\infty}(s)}\left( \cost^{\minmax}(s,a)+
       \minmax_{T\in T^*(s,a)}\sum_{s'\in \states^{\minmax,<\infty}}T(s') L^{\minmax}(s')  \right).
     \label{eq:local350}
   \end{multline}
   Moreover, the restriction of the minimization
   to actions from $\act^{\minmax,<\infty}$ also already is valid for the definition of
   $\viop^{\minmax} L^{\minmax,n}(s)$ for all sufficiently large $n$. 
 We have that $L^{\minmax,n}\rightarrow L^{\minmax}$ uniformly on the (finite) set
 $\states^{\minmax,<\infty}$. It follows that for all $s\in \states^{\minmax,<\infty}$ and
 $a\in\act^{\minmax,<\infty}$:
 \begin{multline}
   \label{eq:local352}
   \minmax_{T\in T^*(s,a)}\sum_{s'\in \states^{\minmax,<\infty}}T(s') L^{\minmax}(s') = \\
   \lim_{n\rightarrow\infty}\minmax_{T\in T^*(s,a)}\sum_{s'\in \states^{\minmax,<\infty}}T(s') L^{\minmax,n}(s').
 \end{multline}
   
With (\ref{eq:local350}) it then follows that
 \begin{displaymath}
   \viop^{\minmax} L^{\minmax}(s)=
   \lim_{n\rightarrow\infty}  \viop^{\minmax} L^{\minmax,n}(s) = L^{\minmax}(s).
 \end{displaymath}

 {\bf Step 3}: $\expected^{\minmax}$  is the least fixed-point of the $\viop^{\minmax}$-operator. That
 $\expected^{\minmax}$ is a fixed-point follows
 immediately from our restriction to stationary and memoryless strategies.

 Let $C\geq 0$ be an arbitrary fixed-point of  $\viop^{\minmax}$. Recalling (\ref{eq:sigmaC}), we
 can then write 
 \begin{equation}
   \label{eq:local400}
   C(s)=
   \minmax_{\alpha_T}\left(\cost^{\minmax}(s,\sigma^{\minmax}(C)(s))+\sum_{s'}\alpha_T(s,\sigma^{\minmax}(C))(s')C(s')\right).
 \end{equation}
 Un-rolling this recurrence for $n$ steps gives
  \begin{equation}
   \label{eq:local410}
   C(s)=
   \minmax_{\alpha_T}\left(
     \sum_{k=1}^n \expected^{(k)}_{s,\sigma^{\minmax}(C),\alpha_T}(\cost^{\minmax}(s_k,\sigma^{\minmax}(C)(s_k)))
   +\expected^{(n+1)}_{s,\sigma^{\minmax}(C),\alpha_T}(C)
     \right),
 \end{equation}
 where  $\expected^{(k)}_{s,\sigma^{\minmax}(C),\alpha^{\minmax}_T(C)}$ is the expectation over the state distribution
 at step $k$ defined by $\sigma^{\minmax}(C)$, $\alpha^{\minmax}_T(C)$ and initial state $s$. Since $C$ is
 non-negative, we can lower-bound (\ref{eq:local410}) by dropping the expectation $\expected^{(n+1)}_{s,\sigma^{\minmax}(C),\alpha_T}(C)$. Then taking the limit $n\rightarrow \infty$ gives
 \begin{multline*}
   C(s)\geq
   \minmax_{\alpha_T}\lim_{n\rightarrow\infty}\left(
     \sum_{k=1}^n \expected^{(k)}_{s,\sigma^{\minmax}(C),\alpha_T}(\cost^{\minmax}(s_k,\sigma^{\minmax}(C)(s_k)))
   \right) \\
   \geq
   \min_{\sigma}\minmax_{\alpha_T}\lim_{n\rightarrow\infty}\left(
     \sum_{k=1}^n \expected^{(k)}_{s,\sigma,\alpha_T}(\cost^{\minmax}(s_k,\sigma(s_k)))
   \right)\\
   =\expected^{\minmax}(\cost^{\minmax}(\pi),s),
 \end{multline*}
 where $\sigma^{\minmax}$ in the second line now is the strategy that actually achieves the minimum in 
 (\ref{eq:Emin}), respectively (\ref{eq:Emax}).
 
 {\bf Step 4}: Putting things together:  From the monotony of
 the $\viop^{\minmax}$ operator and the fixed point property of $ \expected^{\minmax}$ it  follows that
 $\expected^{\minmax}\geq L^{\minmax}$. From the fixed point property of
 $L^{\minmax}$ and the minimality property of $\expected^{\minmax}$ it follows that
 $\expected^{\minmax}\leq L^{\minmax}$.
 \end{proof}

\correctbounds*

 \begin{proof}
   We first consider  the inequality 
$\expected_{\mdp_\partition}^{\min}(\cost_\partition^*,[s]_\partition)\leq \expected_\mdp(\cost,s)$. 
Let $L^{\min,k}_{\mdp_\partition}$ denote the cost function obtained after
the $k$'th round of value iteration in $\mdp_\partition$, and $L^k_{\mdp}$ denote the cost function
after the  $k$'th round of value iteration in $\mdp$.
     By induction we show that $ L^{\min,k}_{\mdp_\partition}(\nu)\leq   L^k_{\mdp}(s)$ for all
     $k\geq 0$ and all $\nu,s$ with $s\in\nu$. For $k=0$ this is true according to the initialization.
     For $k+1$ we can then write:
     \begin{multline}
       L^{\min,k+1}_{\mdp_\partition}(\nu)  =
       \min_{a\in\act} \min_{s\in\nu}\left(\cost^{\min}(\nu,a)+
         \sum_{\nu'}T_\partition(s,a)(\nu') L^{\min,k}_{\mdp_\partition}(\nu')  \right)\\
       =  \min_{s\in\nu} \min_{a\in\act}\left(\cost^{\min}(\nu,a)+
         \sum_{\nu'}\int_{\nu'}T(s,a)(t)L^{\min,k}_{\mdp_\partition}(\nu')dt  \right)\\
       \leq \min_{s\in\nu} \min_{a\in\act}\left(\cost(s,a)+
         \sum_{\nu'}\int_{\nu'}T(s,a)(t)  L^k_{\mdp}(t)dt  \right)
       = \min_{s\in\nu} L^{k+1}_{\mdp}(s)
     \end{multline}
     The inequality $ \expected_\mdp(\cost,s) \leq
     \expected_{\mdp_\partition}^{\max}(\cost_\partition^*,[s]_\partition)$ is proven in the same way, here
     noting that $\min_a\max_s \geq \max_s\min_a$. The same line of argument can also be used to establish
     the inequalities (\ref{eq:AB1}),(\ref{eq:AB2}).
\qed
 \end{proof}

 \mainlem*
 \begin{proof}
   By induction on $N$. For $N=1$ let $\delta$ be such that for all
   $\partition$ with $\delta(\partition)\leq\delta$ it holds that  for all $\nu\in\partition$:
   $\cost^{\max}(\nu,a)-\cost^{\min}(\nu,a)\leq \epsilon$, and
   $\dtv(T(s,a),T(s',a))\leq \epsilon$ for all $s,s'\in\nu$. Then (\ref{eq:ENeq}) holds because
   \begin{displaymath}
      |\ENtau{N}{+}- \ENtau{N}{-}| \leq \cost^{\max}(\nu_1,\tau(\nu_1))-\cost^{\min}(\nu_1,\tau(\nu_1)),
    \end{displaymath}
    and (\ref{eq:PNeq}) follows from
    \begin{displaymath}
      \PNtau{1}{+[-]}= \alpha_T^{+[-]}(\nu_1,\tau(\nu_1)).
    \end{displaymath}

    For the induction step, let $c^{\max}:=\max_{s\in\states,a\in\act}C(s,a)<\infty$.
    Let $\delta$ be such that (\ref{eq:ENeq}) and (\ref{eq:PNeq}) hold for
    $N-1$ and $\epsilon':=\min\{\epsilon/(2(1+2c^{\max})),\epsilon/3\}$, and also such that
    for all  $\nu$ with $\delta(\nu)\leq \delta$ and all $a\in\act$:
    $\cost^{\max}(\nu,a)-\cost^{\min}(\nu,a)\leq \epsilon'$,
    and  $\dtv(T(s,a),T(s',a))\leq \epsilon'$ for all $s,s'\in\nu$.
    Let $\partition$ have granularity $\leq\delta$. 
    We then have
    \begin{multline*}
      |\ENtau{N}{+}- \ENtau{N}{-}| = \\
      | \ENtau{N-1}{+}+\sum_{\nu\in\partition} \PNtau{N-1}{+}(\nu) \alpha_C^+(\nu,\tau(\nu)) -
      (\ENtau{N-1}{-}+\sum_{\nu\in\partition} \PNtau{N-1}{-}(\nu) \alpha_C^-(\nu,\tau(\nu)))| \leq \\
      | \ENtau{N-1}{+}-\ENtau{N-1}{-}|  +
      | \sum_{\nu\in\partition} \PNtau{N-1}{+}(\nu) \alpha_C^+(\nu,\tau(\nu)) -
      \sum_{\nu\in\partition} \PNtau{N-1}{-}(\nu) \alpha_C^-(\nu,\tau(\nu)))|
    \end{multline*}
    By induction hypothesis, the left term is bounded by $\epsilon'<\epsilon/2$. According to
    Lemma~\ref{lem:dtvbasic} {\bf A}, the right term is bounded by $\epsilon/2$, thus yielding 
    (\ref{eq:ENeq}).

    The bound  (\ref{eq:PNeq}) directly follows from Lemma~\ref{lem:dtvbasic} {\bf B}.
    \qed
  \end{proof}
  
\end{document}